\documentclass{article}

\usepackage{indentfirst}

\usepackage[T1]{fontenc}
\usepackage{newtxtext}

\usepackage{lmodern}

\usepackage[utf8]{inputenc}
\usepackage[
  a4paper,
  top=4.5cm,
  bottom=4.0cm,
  left=4.0cm,
  right=4.0cm
]{geometry}

\usepackage{amsmath, amssymb, amsthm}
\usepackage{amsthm}
\usepackage{amssymb}
\usepackage{exscale}
\usepackage[mathscr]{euscript}
\usepackage{url}
\usepackage[all,ps,tips,tpic]{xy}
\usepackage{graphicx}
\usepackage{color}
\usepackage{float}

\usepackage{natbib}
\usepackage{tikz}
\usetikzlibrary{arrows.meta,positioning}
\usetikzlibrary{arrows.meta,calc}
\usetikzlibrary{decorations.pathreplacing,calc}

\usepackage{xcolor}
\definecolor{mypurple}{HTML}{813159}
\definecolor{myteal}{HTML}{0B3948}
\definecolor{mygray}{RGB}{200,200,200}
\definecolor{curveteal1}{HTML}{0B3948}
\definecolor{curveteal2}{HTML}{10556D}
\definecolor{curveteal3}{HTML}{147190}
\definecolor{rewardgreen}{HTML}{00551A}

\usepackage[colorlinks=true,
            linkcolor=curveteal3,
            urlcolor=curveteal3,
            citecolor=curveteal3]{hyperref}
        
\usepackage{algorithm}
\usepackage{algorithmic}

\usepackage[most]{tcolorbox}
\usepackage{listings}
\usepackage{xcolor}

\lstdefinestyle{promptstyle}{
    basicstyle=\ttfamily\footnotesize,
    breaklines=true,
    columns=fullflexible,
    keepspaces=true,
    showstringspaces=false
}

\tcbuselibrary{breakable}

\newtcolorbox[auto counter, number within=section]{promptbox}[2][]{
    colback=gray!5,
    colframe=gray!60,
    title=Prompt~\thetcbcounter: #2,
    breakable,
    #1
}

\usepackage{subcaption}
\usepackage{booktabs}
\usepackage{cleveref}
\usepackage{titlesec}

\titleformat{\section}
  {\centering\fontfamily{cmr}\selectfont\scshape}
  {\thesection.}
  {1em}
  {}

\titleformat{\subsection}
  {\fontfamily{cmr}\selectfont\scshape}
  {\thesubsection.}
  {1em}
  {}

\newtheorem{lem}{Lemma}[section]

\newtheorem{prop}[lem]{Proposition}

\theoremstyle{remark}

\newtheorem{example}[lem]{Example}

\newdir{ >}{{}*!/-5pt/@{>}}
\SelectTips{cm}{10}

\newcommand{\alg}[1]{\pi_\phi\left( {#1} \right)}

\newcommand{\ours}{\textsc{Spiral}}

\newcommand{\oursfull}{\textsc{Sequential-Parallel-Aggregative Reinforcement Learning}}

\makeatletter
\renewcommand{\@dotsep}{10000}
\makeatother
\usepackage{fancyhdr}

\fancypagestyle{firstpage}{
    \fancyhf{}
    \fancyfoot[L]{\textit{Preprint (ongoing work)}. \textit{Date}: \today}

}

\title{
\vspace{-1cm}
\Large {\MakeUppercase{Spiral: Learning to Search and Aggregate}}
}

\author{
  \normalsize {Jubayer Ibn Hamid*, Ifdita Hasan Orney*,} \\
  \normalsize {Michael Y. Li, Omar Shaikh, Yoonho Lee,} \\
  \normalsize{Dorsa Sadigh, Chelsea Finn, Noah Goodman}
}

\date{
\small{Stanford University} \\
\small{$^*$Equal contribution. Correspondence to \texttt{\{jubayer, ifdi1101\}@stanford.edu}.}
}

\begin{document}
\maketitle
\thispagestyle{firstpage}

% \vspace{0.3em}
 
\begin{center}
\begin{minipage}{0.9\textwidth}
\begin{center}
    \normalsize
    {\textsc{Abstract}}
\end{center} Language model reasoning can be substantially improved at test time via scaffolds that scale inference compute across different primitives---sequential reasoning within a trace, independently sampled parallel traces, and aggregation of multiple reasoning traces into a final response. During post-training, however, language models are optimized only for sequential reasoning within a single trace. We introduce \oursfull{} (\ours{}), a framework in which a language model is trained to use all three primitives, as part of a unified inference compute pipeline. Concretely, the language model first samples a set of independent traces in parallel, each produced through sequential chain-of-thought reasoning, and then generates a final aggregation trace conditioned on those traces; all components are optimized end-to-end against the reward of the final aggregated response. To train this system, \ours{} uses set reinforcement learning to teach models to produce a set of traces that are collectively useful for an aggregator and standard reinforcement learning to teach models to aggregate the set into improved final responses. Our experiments on reasoning tasks show that \ours{} effectively scales with inference compute, outperforming GRPO by up to 11$\times$ scaling efficiency and 15\% higher performance when all three compute primitives are scaled. 
\end{minipage}
\end{center}

\section{Introduction}

\begin{figure}[t]
    \centering

    \begin{subfigure}{1.0\linewidth}
        \centering
        \resizebox{1.0\linewidth}{!}{\newcommand{\drawNNicon}[1]{%
\begin{scope}[shift={(#1.center)}, scale=1.22, transform shape]
    % connections: input -> hidden
    \foreach \yin in {0.42,0.00,-0.42}{
        \foreach \yhid in {0.25,-0.25}{
            \draw[black!55, line width=0.5pt]
                (-0.58,\yin) -- (0.00,\yhid);
        }
    }

    % connections: hidden -> output
    \foreach \yhid in {0.25,-0.25}{
        \foreach \yout in {0.42,0.00,-0.42}{
            \draw[black!55, line width=0.5pt]
                (0.00,\yhid) -- (0.58,\yout);
        }
    }

    % input layer
    \foreach \y in {0.42,0.00,-0.42}{
        \node[circle, draw=black, fill=white, line width=0.8pt,
              minimum size=0.18cm, inner sep=0pt] at (-0.58,\y) {};
    }

    % hidden layer
    \foreach \y in {0.25,-0.25}{
        \node[circle, draw=black, fill=myteal!15, line width=0.8pt,
              minimum size=0.18cm, inner sep=0pt] at (0.00,\y) {};
    }

    % output layer
    \foreach \y in {0.42,0.00,-0.42}{
        \node[circle, draw=black, fill=white, line width=0.8pt,
              minimum size=0.18cm, inner sep=0pt] at (0.58,\y) {};
    }
\end{scope}
}

\begin{tikzpicture}[>=Stealth, font=\fontsize{12pt}{14.4pt}\selectfont]

% Colors
\definecolor{myteal}{HTML}{0B3948}
\definecolor{mygray}{RGB}{200,200,200}
\definecolor{curveteal1}{HTML}{0B3948}
\definecolor{curveteal2}{HTML}{10556D}
\definecolor{curveteal3}{HTML}{147190}
\definecolor{rewardgreen}{HTML}{00551A}

% -------------------------
% Nodes
% -------------------------

% Input x
\node[
    circle,
    fill=mygray,
    minimum size=0.95cm,
    inner sep=1.2pt,
    align=center
] (x) at (0,0) {Problem\\$x$};

% First language model anchor
\node[
    minimum width=2.15cm,
    minimum height=2.05cm,
    inner sep=0pt
] (lm1) at (3.00,0) {};

% Concat box
\node[
    rounded corners=3pt,
    fill=mygray,
    minimum width=2.25cm,
    minimum height=1.10cm,
    inner sep=3pt,
    align=center
] (concatbox) at (11.70,0) {%
    Concatenate \\
    Problem + Traces\\
    $x,y_1,\ldots,y_n$
};

% Second language model anchor
\node[
    minimum width=2.15cm,
    minimum height=2.05cm,
    inner sep=0pt
] (lm2) at (15.10,0) {};

% Output y*
\node[
    circle,
    fill=myteal,
    minimum size=1.05cm,
    inner sep=0pt
] (ystar) at (18.65,0) {\color{white}$y_*$};

% Reward text
\node[
    anchor=west,
    text=rewardgreen
] (rewardtext) at (20.70,0.02) {{\fontsize{16pt}{19.2pt}\selectfont $r(x,y_*)$}};

\node[
    anchor=north,
    text=black,
    align=center
] at ($(rewardtext.south)+(0,-0.10)$) {%
    \textbf{Reward}
};

% Draw neural-network icons
\drawNNicon{lm1}
\drawNNicon{lm2}

\node[anchor=north, text=black, align=center] at ($(lm1.south)+(0,-0.08)$) {%
    Language\\Model
};

\node[anchor=north, text=black, align=center] at ($(lm2.south)+(0,-0.08)$) {%
    Language\\Model
};

% -------------------------
% Straight arrows
% -------------------------

% x -> LM1
\draw[->, line width=1.0pt, draw=gray, shorten <=4pt, shorten >=4pt]
    (x.east) -- (lm1.west);

% concat box -> LM2
\draw[->, line width=1.0pt, draw=gray, shorten <=2pt, shorten >=2pt]
    (concatbox.east) -- (lm2.west);

% LM2 -> y*
\draw[->, line width=1.4pt, draw=myteal, shorten <=5pt, shorten >=5pt]
    (lm2.east) -- (ystar.west);

% y* -> reward text
\draw[->, line width=1.0pt, draw=gray, shorten <=5pt, shorten >=7pt]
    (ystar.east) -- (rewardtext.west);

% -------------------------
% Parallel output circles
% -------------------------

\node[
    circle,
    fill=myteal,
    minimum size=0.70cm,
    inner sep=0pt
] (y1) at (8.00,1.68) {\color{white}$y_1$};

\node[
    circle,
    fill=myteal,
    minimum size=0.70cm,
    inner sep=0pt
] (y2) at (8.00,0.45) {\color{white}$y_2$};

\node[
    circle,
    fill=myteal,
    minimum size=0.70cm,
    inner sep=0pt
] (y3) at (8.00,-0.45) {\color{white}$y_3$};

\node[
    circle,
    fill=myteal,
    minimum size=0.70cm,
    inner sep=0pt
] (yn) at (8.00,-1.70) {\color{white}$y_n$};

\node[text=myteal] (ydots) at (8.00,-1.00) {$\vdots$};

% -------------------------
% Parallel curved arrows from LM1
% -------------------------

% Top arrow
\draw[->, line width=1.4pt, draw=curveteal1, shorten <=7pt, shorten >=0.38cm]
    (lm1.east)
    .. controls (5.15,0.70) and (6.25,1.68)
    .. ($(y1.center)+(-0.80,0)$)
    .. controls ($(y1.center)+(-0.45,0)$) and ($(y1.center)+(-0.25,0)$)
    .. (y1.center);

% Upper-middle arrow
\draw[->, line width=1.4pt, draw=curveteal1, shorten <=7pt, shorten >=0.38cm]
    (lm1.east)
    .. controls (5.60,0.65) and ($(y2.center)+(-0.90,0)$)
    .. (y2.center);

% Lower-middle arrow
\draw[->, line width=1.4pt, draw=curveteal1, shorten <=7pt, shorten >=0.38cm]
    (lm1.east)
    .. controls (5.60,-0.65) and ($(y3.center)+(-0.90,0)$)
    .. (y3.center);

% Bottom arrow
\draw[->, line width=1.4pt, draw=curveteal1, shorten <=7pt, shorten >=0.38cm]
    (lm1.east)
    .. controls (5.15,-0.70) and (6.25,-1.70)
    .. ($(yn.center)+(-0.80,0)$)
    .. controls ($(yn.center)+(-0.45,0)$) and ($(yn.center)+(-0.25,0)$)
    .. (yn.center);

% -------------------------
% Bottom braces
% -------------------------

% Level-1 brace
\draw[
    decorate,
    decoration={brace, mirror, amplitude=5pt},
    draw=myteal,
    line width=0.8pt
]
    (4.25,-2.55) -- (8.25,-2.55)
    node[midway, below=8pt, align=center, text=myteal] {%
        Optimize via Set RL
    };

% % Label above Level-1 brace
% \node[
%     align=center,
%     text=black,
%     anchor=south
% ] at ($(4.25,-2.75)!0.5!(8.25,-2.75) + (0,0.26)$) {%
%     \textbf{Search Traces}
% };

% Level-2 brace
\draw[
    decorate,
    decoration={brace, mirror, amplitude=5pt},
    draw=myteal,
    line width=0.8pt
]
    ($(lm2.east)+(0.25,-2.55)$) -- ($(ystar.east)+(-0.05,-2.55)$)
    node[midway, below=8pt, align=center, text=myteal] {%
        Optimize via Standard RL
    };

% % Label above Level-2 brace
% \node[
%     align=center,
%     text=black,
%     anchor=south
% ] at ($($(lm2.east)+(0.0,-2.75)$)!0.5!($(ystar.east)+(-0.05,-2.75)$) + (0.25,0.26)$) {%
%     \textbf{Aggregation Trace}
% };

% -------------------------
% Bracket geometry
% -------------------------

\coordinate (seqBracketLeft)  at ($(lm1.east)+(0.05,2.18)$);
\coordinate (seqBracketRight) at ($(y1.east)+(0.05,0.50)$);

\coordinate (parBracketTop)    at ($(y1.east)+(0.25,0.25)$);
\coordinate (parBracketBottom) at ($(yn.east)+(0.25,-0.25)$);

\coordinate (aggBracketLeft)  at ($(lm2.east)+(-0.05,1.25)$);
\coordinate (aggBracketRight) at ($(ystar.east)+(0.05,1.25)$);

% -------------------------
% Brackets
% -------------------------

% Sequential bracket
\draw[draw=myteal, line width=0.8pt]
    (seqBracketLeft) -- (seqBracketRight);
\draw[draw=myteal, line width=0.8pt]
    ($(seqBracketLeft)+(0,-0.07)$) -- ($(seqBracketLeft)+(0,0.07)$);
\draw[draw=myteal, line width=0.8pt]
    ($(seqBracketRight)+(0,-0.07)$) -- ($(seqBracketRight)+(0,0.07)$);

% Parallel bracket
\draw[draw=myteal, line width=0.8pt]
    (parBracketTop) -- (parBracketBottom);
\draw[draw=myteal, line width=0.8pt]
    ($(parBracketTop)+(-0.08,0)$) -- ($(parBracketTop)+(0.08,0)$);
\draw[draw=myteal, line width=0.8pt]
    ($(parBracketBottom)+(-0.08,0)$) -- ($(parBracketBottom)+(0.08,0)$);

% Aggregative bracket
\draw[draw=myteal, line width=0.8pt]
    (aggBracketLeft) -- (aggBracketRight);
\draw[draw=myteal, line width=0.8pt]
    ($(aggBracketLeft)+(0,-0.07)$) -- ($(aggBracketLeft)+(0,0.07)$);
\draw[draw=myteal, line width=0.8pt]
    ($(aggBracketRight)+(0,-0.07)$) -- ($(aggBracketRight)+(0,0.07)$);

% -------------------------
% Dotted arrows to concat box
% -------------------------

\draw[->, densely dotted, line width=0.9pt, draw=black!25, shorten <=4pt, shorten >=4pt]
    (y1.east) -- ($(concatbox.west)+(0,0.36)$);

\draw[->, densely dotted, line width=0.9pt, draw=black!25, shorten <=4pt, shorten >=4pt]
    (y2.east) -- ($(concatbox.west)+(0,0.12)$);

\draw[->, densely dotted, line width=0.9pt, draw=black!25, shorten <=4pt, shorten >=4pt]
    (y3.east) -- ($(concatbox.west)+(0,-0.12)$);

\draw[->, densely dotted, line width=0.9pt, draw=black!25, shorten <=4pt, shorten >=4pt]
    (yn.east) -- ($(concatbox.west)+(0,-0.36)$);

% -------------------------
% Labels
% -------------------------

\node[
    align=center,
    text=myteal
] at (6.50,2.53) {%
    \textbf{Sequential Compute}
};

\node[
    align=left,
    anchor=west,
    text=myteal
] at (8.75,1.50) {%
    \textbf{Parallel Compute}
};

\node[
    align=center,
    text=myteal
] at (17.75,2.00) {%
    \textbf{Aggregative}\\
    \textbf{Compute}
};

% % -------------------------
% % Title
% % -------------------------
% \node[
%     anchor=south,
%     text=myteal,
%     font=\bfseries\fontsize{18pt}{21pt}\selectfont
% ] at ($(current bounding box.north)+(0,0.35)$) {SPIRAL};

\end{tikzpicture}}
        \label{fig:inference-compute-pipeline-top}
    \end{subfigure}

    % \vspace{1em}

    \begin{subfigure}{1.0\linewidth}
        \centering
        \includegraphics[width=0.85\linewidth]{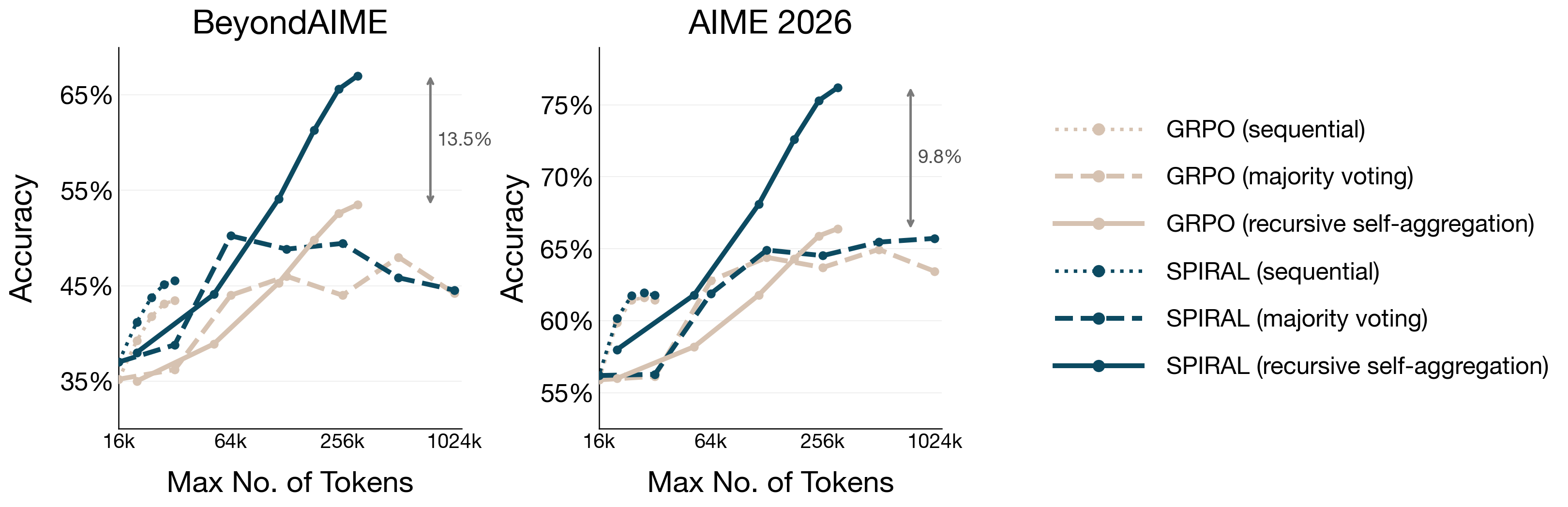}
        \label{fig:inference-compute-pipeline-bottom}
    \end{subfigure}
    \caption{
    \textbf{Overview of \ours{}.} 
    \textbf{Top:} \ours{} trains a language model (LM) to use sequential, parallel, and aggregative inference compute end-to-end. Given an input $x$, we independently sample $n$ parallel reasoning traces. The LM then synthesizes these into a final aggregation trace $y_*$. Using only the final reward $r(x, y_*)$, \ours{} optimizes the parallel generations via set RL and the aggregation trace via standard RL. Consequently, the model jointly learns to generate traces that aggregate usefully and to synthesize them effectively.
    \textbf{Bottom:} \ours{} scales significantly better than GRPO as we scale all three primitives at test-time, particularly under hybrid strategies that blend these primitives like recursive self-aggregation.}
    \label{fig:inference-compute-pipeline}
\end{figure}

Language models exhibit a distinct jagged edge in intelligence during open-ended discovery. When inference compute is scaled, these systems can resolve highly complex challenges, such as the 70-year-old open Erd\H{o}s unit distance problem \cite{alon2026remarksdisproofunitdistance}. Yet, on other rigorous tasks, such as certain First Proof problems \cite{abouzaid2026proof, abouzaid2026proofsecondbatch}, models struggle to make progress as their performance fails to scale with additional compute. Crucially, these failures do not occur because the unsolved problems are strictly more difficult, but because of an inability to effectively utilize the inference compute available. 

\medskip

This stagnation highlights a critical bottleneck. When sequential compute is scaled—allowing the model to allocate thinking tokens before producing a final answer—models frequently misallocate this budget, extensively detailing routine operations while glossing over complex, decisive logical leaps \cite{abouzaid2026proofsecondbatch}. When parallel compute is scaled through the independent sampling of multiple reasoning traces, models often fail to explore the solution space broadly, collapsing instead into redundant, highly correlated attempts. Finally, when tasked with aggregative compute—synthesizing a set of candidate traces into a refined generation—models struggle to natively verify, filter, and combine disparate ideas. Consequently, translating raw compute into effective search currently requires practitioners to hand-design elaborate scaffolds that orchestrate separately trained agents for verification and revision \cite{novikov2025alphaevolve, lopopolo2026harness, feng2026towards}.

\medskip

We argue that a critical step toward overcoming this challenge is enabling models to learn how to optimally utilize and coordinate all primitives of inference compute for effective exploration and exploitation. Current reinforcement learning paradigms predominantly optimize reasoning models for sequential compute alone: the model must succeed by generating a single, high-quality chain of thought that is rewarded based on its final answer \cite{shao2024deepseekmathpushinglimitsmathematical, guo2025deepseek}. Yet, at test time, practitioners routinely scale other primitives of inference compute, such as independently sampled parallel traces and cross-trace aggregation. During training, models are completely blind to these additional primitives and, therefore, never learn to actively coordinate them. As such, the orchestration of these diverse computing primitives into a cohesive strategy across the full inference pipeline is instead left to hand-designed scaffolds and harnesses.

\medskip

In this paper, we expose the model to three primitives of inference compute during training: sequential compute within an individual trace (e.g., reasoning tokens and tool calls), parallel compute across independently sampled traces, and aggregative compute, wherein the model synthesizes a set of candidate traces into a final output (\cref{fig:inference-compute-pipeline}). By optimizing these primitives in a unified framework against the reward of the final output, we bridge the gap between training and test-time deployment, enabling the model to discover search procedures that surpass rigid, hand-designed heuristics. We propose {\oursfull{} (\ours{})}, a reinforcement learning framework that optimizes all three primitives end-to-end. To optimize aggregation traces, \ours{} uses standard reinforcement learning algorithms \cite{10.5555/3009657.3009806}, such as GRPO \cite{shao2024deepseekmathpushinglimitsmathematical}, to train a model to synthesize a set of traces into an improved generation. Parallel traces, on the other hand, must be optimized collectively to facilitate optimal aggregation. Credit assignment to each individual trace must recognize that useful and diverse ideas might not yield a correct solution in isolation, yet can be coupled with other attempts to do so during the aggregation phase. \ours{} leverages set reinforcement learning \cite{hamid2026polychromicobjectivesreinforcementlearning, orney2026polyepotrainingexploratoryreasoning}, which naturally enables this joint credit assignment via a low-variance marginal advantage. Finally, we introduce a scalable recipe for optimizing a language model within this framework, with sample complexity comparable to standard RL paradigms for reasoning.

\medskip

We empirically evaluate \ours{} and compare against other standard reinforcement learning methods under an equal training compute budget. We fine-tune \texttt{Qwen3-4b-Instruct-2507} using \ours{} and compare against GRPO \cite{shao2024deepseekmathpushinglimitsmathematical, guo2025deepseek} on mathematical reasoning. At test-time, we scale various axes of inference compute and find that \ours{} excels at using complex test-time compute. Under parallel compute scaling, we see the pass@$k$ performance gaining up to 11$\times$ scaling efficiency on test-sets. When scaling parallel and aggregative compute, we see that \ours{} achieves up to 15\% higher performance on test-sets.
\section{Preliminaries}
\subsection{Set Reinforcement Learning}
Set reinforcement learning (set RL)~\citep{hamid2026polychromicobjectivesreinforcementlearning} is a framework that assigns a reward to a set of sampled actions, all of which are coupled under a shared learning signal. In the sequential formulation, we independently sample a set of $n$ actions (where $n > 1$) at every state from $\pi_\theta(\cdot \mid s)$. Given the sampled set $a_1,\dots,a_n \overset{\mathrm{i.i.d.}}{\sim} \pi_\theta(\cdot \mid s)$, we assign a single reward $f(s,a_{1:n})$ to the entire set. Unlike standard RL, which samples a single trajectory, this scheme samples a set of actions at every timestep, thereby generating a \textit{tree} of states visited by the policy. At depth $t$, the tree contains $n^t$ states, denoted by $s_t^{(1)},\dots,s_t^{(n^t)}$. Let $(a_t)^{(i)}_{1:n}$ be the set of $n$ actions sampled from $\pi_\theta(\cdot \mid s_t^{(i)})$. The goal of set RL is to solve the following problem:
\begin{align}
\label{eq: set_rl_every_state}
\max_\theta V^\sharp_{\pi_\theta}(s;f) &= \max_\theta \mathbb{E}_{\pi_\theta}[ \sum_{t=0}^{\infty} \sum_{i=1}^{n^t} \gamma^t f\left(s_t^{(i)}, (a_t)^{(i)}_{1:n}\right) \mid s_0 = s ].
\end{align}

By definition, $f$ must be an objective such that one can never write $\mathbb{E}_{a_{1:n} \sim \pi_\theta(\cdot \mid s)}\left[ f(s, a_{1:n})\right]$ as $\mathbb{E}_{a \sim \pi_\theta(\cdot \mid s)}\left[ g(s, a)\right]$ for any function $g$ that is independent of $\theta$ for all policy parameters $\theta \in \Theta$. This ensures the objective provides a joint learning signal, preventing the problem from trivially reducing to standard RL with individualized credit assignment. Intuitively, the sequential formulation of set RL encourages the policy to maximize the expected score of the \textit{tree} it induces, whereas standard RL maximizes the expected reward of a \textit{single trajectory}.

\medskip
\paragraph{Language Model Formulation.} For language model training (and bandit settings), a more practical formulation is obtained by applying a set-level objective over $n$ i.i.d.\ generations from the same prompt. Let $y_{1:n} := (y_1,\dots,y_n)$ be the multiset where each $y_i \overset{\mathrm{i.i.d.}}{\sim} \pi_\theta(\cdot \mid x)$, and let $f(x, y_{1:n})$ be our set-level objective. The goal of set RL is to solve:
\begin{align}
\label{eq: set_RL_for_LMs}
\max_\theta \mathbb{E}_{x \sim \mathcal{D}} \mathbb{E}_{y_{1:n} \sim \pi_\theta(\cdot \mid x)}[f(x,y_{1:n})].
\end{align} In this setting, a set-level reward is shared by all generations within the set. An example of such an objective is the polychromic objective defined as $f(x, y_{1:n}) = \frac{1}{n} \sum_{i=1}^n r(x, y_i) \cdot d(x, y_{1:n})$, where $d(x, y_{1:n})$ is a measure of the diversity of the generations \cite{hamid2026polychromicobjectivesreinforcementlearning, orney2026polyepotrainingexploratoryreasoning}. This objective directly encourages the model to sample generations that balance exploration and exploitation. In this paper, we will use a different objective under set reinforcement learning; in particular, our objective will not require access to any auxiliary objective or diversity function.

The objective in \cref{eq: set_RL_for_LMs} yields the following policy gradient:
\begin{align}
\label{eq: gradient_of_set_RL}
\nabla_\theta \mathbb{E}_{x \sim \mathcal{D}} \mathbb{E}_{y_{1:n} \sim \pi_\theta(\cdot \mid x)}[f(x,y_{1:n})] &= \mathbb{E}_{x \sim \mathcal{D}} \mathbb{E}_{y_{1:n} \sim \pi_\theta(\cdot \mid x)} [ (f(x,y_{1:n}) - \hat f(x)) \sum_{i=1}^{n} \nabla_\theta \log \pi_\theta(y_i \mid x)]
\end{align}
where $\hat{f}(x)$ is a set-level baseline. The defining feature of \cref{eq: gradient_of_set_RL} is that all generations in the sampled set $y_{1:n}$ share the identical scalar learning signal, $f(x,y_{1:n}) - \hat f(x)$. Consequently, $\hat{f}(x)$ must be independent of any proper subset of $y_{1:n}$. This fundamentally differs from standard RL---where each generation receives its own advantage---because the objective $f$ couples the samples, and the gradient assigns equal credit to all elements via the shared signal. Because this shared credit assignment must not be broken, techniques like leave-one-out estimators are strictly precluded.

\paragraph{General Recipe.}
\cite{orney2026polyepotrainingexploratoryreasoning} propose a practical recipe for approximating the set RL gradient in \cref{eq: gradient_of_set_RL} by amortizing over a pool of samples. For a prompt \(x\), first sample \(N > n\) independent generations \(y_{1:N} \sim \pi_\theta(\cdot \mid x)\). From this pool, construct subsets \(G \subseteq \{y_1,\dots,y_N\}\) of size \(n\), either by enumerating all \(\binom{N}{n}\) such subsets or by uniformly sampling \(K\) subsets without replacement. Each subset is scored using the set objective \(f(x,G)\). Given the resulting collection \(\mathcal{G}\) of evaluated subsets, define the set-level baseline \(\hat f(x) = \frac{1}{|\mathcal{G}|}\sum_{G \in \mathcal{G}} f(x,G)\), and the corresponding set advantage of each set \(A(x,G) = f(x,G) - \hat f(x)\). To obtain a learning signal for each sampled generation \(y_j\), average the advantages of all evaluated subsets that contain it. Equivalently, letting \(\mathcal{G}_j = \{G \in \mathcal{G} : y_j \in G\}\), define the marginal set advantage as \(A_{\mathrm{marg}}(x,y_j) = \frac{1}{|\mathcal{G}_j|}\sum_{G \in \mathcal{G}_j} A(x,G)\). This marginal set advantage assigns credit to an individual generation according to the average utility of the sets in which it participates, while preserving the set-level structure of the objective. In this work, we use this recipe to train parallel candidate traces according to their collective usefulness for downstream aggregation.

\section{Learning Sequential, Parallel and Aggregative Inference}

We aim to train a language model to optimize its usage of three fundamental primitives of inference compute. Specifically, during training, we expose the model to the following:
\begin{enumerate}
    \item \textbf{Sequential compute.} Sequential inference refers to computation allocated within an individual trace, including intermediate reasoning tokens, self-correction, verification, revision, and tool calls. The model must learn to optimally use this compute within a single chain of thought and acquire useful behaviors such as decomposing problems, setting intermediate subgoals, rigorously developing ideas, verifying partial solutions, and backtracking when necessary \cite{gandhi2025cognitivebehaviorsenableselfimproving}.
    \item \textbf{Parallel compute.} Parallel inference refers to independently sampled traces conditioned on the same problem. This primitive provides the model an opportunity to explore a diverse range of plausible approaches, conjectures, intermediate hypotheses, and solution paths that can be combined or refined later \cite{madaan2023selfrefineiterativerefinementselffeedback, brown2024large, snell2024scaling}. The model must learn for itself to search broadly across traces while developing each attempt sufficiently for downstream aggregation to recover useful information.
    \item \textbf{Aggregative compute.} Aggregative inference refers to computation that conditions on multiple candidate traces along with the original problem to produce a final output. This primitive allows the model to inspect, compare, verify, refine, and synthesize information from previous generations. Through training, the model must learn to effectively utilize information within and across generations to output an optimal final response \cite{li2025llmsgeneratebetteranswer, venkatraman2025recursive}. Crucially, when no candidate trace is promising, the model should learn to recognize this failure mode and make a fresh attempt rather than merely recombining poor solutions.
\end{enumerate} 

These primitives can be combined in various ways to build scaffolds and algorithm loops, potentially involving multiple language models, such as AlphaEvolve \cite{novikov2025alphaevolve} and Mixture-of-Agents \cite{wang2025mixture}.

\subsection{Problem Formulation}

We formulate learning to effectively use inference compute as a reinforcement learning problem. For a given problem $x$, our objective is:
\begin{align}
    \label{eq: inference-compute-learning-as-RL-problem}
    \max_{\theta, \phi} \mathbb{E}_{y_{1:n} \sim \pi_\theta(\cdot \mid x)}\left[ \mathbb{E}_{y_* \sim \alg{\cdot \mid x, y_{1:n}}}[r(x, y_*)] \right]. 
\end{align}
Crucially, this objective only rewards the final output and our aim is to optimize the language model's performance across the entire inference compute pipeline (\cref{fig:inference-compute-pipeline}). Optimizing this formulation requires the model to learn how to allocate sequential compute within a single chain of thought (i.e., each $y_i \sim \pi_\theta(\cdot \mid x)$), how to sample high-quality parallel traces that broadly explore the solution space (i.e., $y_{1:n} \sim \pi_\theta(\cdot \mid x)$), and how to aggregate these traces effectively to arrive at a correct answer (i.e., $y_* \sim \pi_\phi(\cdot \mid x)$). This formulation naturally permits using a different model for aggregation ($\phi \neq \theta$) or the same model for both stages ($\phi = \theta$).

\medskip

Optimizing this objective via reinforcement learning allows the model to internalize general search strategies that subsume several rigid, hand-designed pipelines. Indeed, existing test-time scaling methods can be viewed as instantiations of the objective in \cref{eq: inference-compute-learning-as-RL-problem}. For example, self-consistency \cite{wang2022self} can be expressed as $\alg{y \mid x, y_{1:n}} = \mathbf{1}\left\{ y = y_i \text{ for some } i \in \arg\max_j N(a(y_j)) \right\},$ where \( N(a) = \sum_{j=1}^n \mathbf{1}\{a(y_j)=a\}\) and $a(y_i)$ is the final answer extracted from $y_i$. Similarly, self-aggregation \cite{li2025llmsgeneratebetteranswer, venkatraman2025recursive} can be instantiated with $\pi_\phi = \pi_\theta$. However, optimizing this objective end-to-end teaches the model to proactively generate parallel traces that are highly useful for aggregation, while simultaneously mastering a more robust synthesis strategy. 

\medskip

Optimizing \cref{eq: inference-compute-learning-as-RL-problem} using standard reinforcement learning faces two immediate obstacles. First, the objective function is defined over a set of trajectories, whereas standard reinforcement learning utilizes an objective (i.e., the reward) defined over a single trajectory. Second, the objective function itself is possibly differentiable.

\begin{figure}[t]
    \centering
    \hspace*{-1.75cm}%
    \resizebox{1.0\linewidth}{!}{\input{figures/miscellaneous/spiral_illustration}
    }
    \caption{
    \ours{} consists of two levels of generation. 
    First, the model samples $N_1$ search traces, $y_1,\ldots,y_{N_1} \sim \pi_\theta(\cdot \mid x)$, from the problem $x$. 
    It then uniformly samples $K$ sets, each consisting of $n$ search traces. 
    For each set, it samples $N_2$ aggregation traces conditioned on both the problem and the set $G_i$, $y^{G_i}_1,\ldots,y^{G_i}_{N_2} \sim \pi_\theta(\cdot \mid x,G_i)$. Rewards are evaluated only at the end of this full inference compute pipeline compute pipeline, using the aggregation trace $r(x,y^{G_i}_j)$. For the aggregation traces, \ours{} uses a standard within-set centered advantage, $A(x,y^{G_i}_j)=r(x,y^{G_i}_j)-\frac{1}{N_2}\sum_{k=1}^{N_2}r(x,y^{G_i}_k)$. For the search traces, \ours{} uses set reinforcement learning: each trace $y_i$ receives a marginal set advantage obtained by averaging the advantages of all sampled sets that contain it ($\mathcal{G}(y_i)$), $A^\sharp_{\mathrm{marg}}(x,y_i)=\frac{1}{|\mathcal{G}(y_i)|}\sum_{G\in\mathcal{G}(y_i)}A^\sharp(x,G)$. Thus, search traces are optimized according to how much they help induce successful downstream aggregation traces when used as part of a conditioning set.    
    }
    \label{fig:spiral_illustration}
\end{figure}

\subsection{\ours{}}

In this section, we introduce our algorithm, \oursfull{} (\ours{}), for optimizing the following objective:
\begin{align}
    \label{eq: one_policy_spiral_objective}
    \mathbb{E}_{y_{1:n} \sim \pi_\theta(\cdot \mid x)}\left[ \mathbb{E}_{y_* \sim \pi_\phi(\cdot \mid x, y_{1:n} )}[r(x, y_*)] \right].    
\end{align}
Throughout the remainder of this paper, we will refer to the parallel generations $y_1,\dots,y_n \sim \pi_\theta(\cdot \mid x)$ as the \textit{search traces} and the final generation $y_* \sim \pi_\phi(\cdot \mid x, y_{1:n})$ as the \textit{aggregation trace}. 

\medskip

We first introduce the recipe for when the same model is used to generate both the search traces and the aggregation trace (i.e., $\theta = \phi$). In \S\ref{app: spiral_with_different_models}, we discuss how our algorithm can also be extended to optimize two distinct models for the two levels of generation. Taking the derivative of the objective in \cref{eq: one_policy_spiral_objective}, we get: 
\begin{align*}
    &\nabla_\theta
    \mathbb{E}_{y_{1:n} \sim \pi_\theta(\cdot \mid x)}
    \left[
        \mathbb{E}_{y_* \sim \pi_\theta(\cdot \mid x,y_{1:n})}
        \left[
            r(x,y_*)
        \right]
    \right] \\
    &=
    \mathbb{E}_{y_{1:n} \sim \pi_\theta(\cdot \mid x)}
    \mathbb{E}_{y_* \sim \pi_\theta(\cdot \mid x,y_{1:n})}
    \left[
        r(x,y_*)
        \nabla_\theta
        \log
        \left(
            \pi_\theta(y_{1:n} \mid x)
            \pi_\theta(y_* \mid x,y_{1:n})
        \right)
    \right] \\
    &=
    \underbrace{
    \mathbb{E}_{y_{1:n} \sim \pi_\theta(\cdot \mid x)}
    \left[
        \nabla_\theta \log \pi_\theta(y_{1:n} \mid x)
        \,
        \mathbb{E}_{y_* \sim \pi_\theta(\cdot \mid x,y_{1:n})}[r(x, y_*)]
    \right]
    }_{\text{Term 1}}
    \\
    &\qquad+
    \underbrace{
    \mathbb{E}_{y_{1:n} \sim \pi_\theta(\cdot \mid x)}
    \left[
        \mathbb{E}_{y_* \sim \pi_\theta(\cdot \mid x,y_{1:n})}
        \left[
            \nabla_\theta \log \pi_\theta(y_* \mid x,y_{1:n})
            \,
            r(x,y_*)
        \right]
    \right]
    }_{\text{Term 2}}.
\end{align*}
Note that Term 2 is the policy gradient used in standard reinforcement learning, where we optimize the performance of the aggregation trace with respect to its own reward. The key insight is that Term 1 represents a policy gradient under set reinforcement learning \cite{hamid2026polychromicobjectivesreinforcementlearning, orney2026polyepotrainingexploratoryreasoning}. To highlight this, we define the following set-level objective function: 
\begin{align}
    \label{eq: inference-compute-set-objective}
    f_\mathrm{spiral}(x, y_{1:n}) = \mathbb{E}_{y_* \sim \pi_\theta({\cdot \mid x, y_{1:n}) }}[r(x, y_*)].
\end{align} 
Notice that the search traces $y_{1:n}$ enter the set-level objective via the conditional expectation. This objective rewards a set of search traces based on their collective ability to enable the aggregation stage to synthesize a high-quality generation. In particular, since all generations in the set receive the same shared learning signal, there is a natural coupling effect allowing the model to learn to explore several strategies in parallel insofar as the agent can synthesize them to arrive at a correct answer. With this definition, we can now rewrite the gradient of \cref{eq: one_policy_spiral_objective} as follows:
\begin{align}
    \label{eq: gradient-of-inference-compute-as-RL}
    & {\nabla_{\theta}} 
    \mathbb{E}_{y_{1:n} \sim \pi_\theta(\cdot \mid x)}
    \left[
        \mathbb{E}_{y_* \sim \pi_\theta( {\cdot \mid x, y_{1:n} })}
        [r(x, y_*)]
    \right] 
    \nonumber \\
    = {} &
    \underbrace{
    \mathbb{E}_{y_{1:n} \sim \pi_\theta(\cdot \mid x)}\left[f_\mathrm{spiral}(x, y_{1:n})\sum_{i=1}^n {\nabla_\theta} \log \pi_\theta(y_i \mid x)\right]}_{\text{Set RL gradient}} \nonumber \\
    & \quad+
    \underbrace{
    \mathbb{E}_{y_{1:n} \sim \pi_\theta(\cdot \mid x)} 
    \left[
        \mathbb{E}_{y_* \sim \pi_\theta( {\cdot \mid x, y_{1:n}) }}
        \left[
            r(x, y_*) 
            \nabla_\theta \log \pi_\theta(y_* \mid x, y_{1:n})
        \right]
    \right]
    }_{\text{Standard RL gradient}}.
\end{align} 
These two gradient terms emerge naturally because we are jointly optimizing both the search traces within a set and the aggregation trace conditioned on that set. The set RL gradient updates the policy to generate an optimal set $y_{1:n}$ that the policy can effectively aggregate. Simultaneously, the standard RL gradient updates the policy to reliably synthesize a given set of generations into the optimal aggregation solution, $y_*$. As such, this framework captures a co-evolutionary procedure. Next, we discuss a scalable algorithm for optimizing \cref{eq: gradient-of-inference-compute-as-RL}. 

\paragraph{On-policy Data Collection.} During each data collection step, we first sample a batch of prompts $x \sim \mathcal{D}$. We then sample $N_1$ search traces conditioned on the prompt: $y_1, \dots, y_{N_1} \sim \pi_\theta(\cdot \mid x)$. Next, we construct sets of size $n$, following the set reinforcement learning recipe in \cite{orney2026polyepotrainingexploratoryreasoning}. In particular, we uniformly sample $K$ sets without replacement, $G_1,\dots,G_K$, from the collection of all $\binom{N_1}{n}$ unordered sets of size $n$. In other words, each set $G_i = \{y_{i,1}, \dots, y_{i,n}\}$ contains $n$ unique generations and no two sets in $\{G_1,\dots,G_K \}$ are identical. Finally, we sample aggregation traces: conditioned on each set $G_i$, we sample $y_{1}^{G_i}, \dots, y_{N_2}^{G_i} \sim \pi_\theta(\cdot \mid x, y_{i,1}, \dots, y_{i,n})$. Each aggregation trace is evaluated using the reward function $r(x, y_j^{G_i})$ for trace index $j \in \{ 1,\dots, N_2 \}$ and set index $i \in \{1,\dots, K\}$. This data collection pipeline is illustrated in \cref{fig:spiral_illustration}. 

\paragraph{Policy Updates.} The policy gradient of \ours{} relies exclusively on the reward assigned to the aggregation traces. Each search trace is optimized via set reinforcement learning, and each aggregation trace is optimized via standard reinforcement learning. 

To optimize the search traces, we apply the general set reinforcement learning recipe proposed by \cite{orney2026polyepotrainingexploratoryreasoning}. First, we score each set $G_i$ under the empirical objective function: 
$$
    \hat{f}_\mathrm{spiral}(x, G_i) = \frac{1}{N_2}\sum_{j=1}^{N_2} r(x, y_{j}^{G_i}), \quad \forall i \in \{1, \dots, K\}.
$$ 
We construct a Monte Carlo estimate of the baseline score across sets: 
$$
    \hat{f}_\mathrm{spiral}(x) = \frac{1}{K}\sum_{i=1}^K \hat{f}_\mathrm{spiral}(x, G_i).
$$ 
This allows us to compute the set advantage of each set $G_i$ as:
$$
    A^\sharp(x,G_i; f_\mathrm{spiral}) = \hat{f}_\mathrm{spiral}(x,G_i) - \hat{f}_\mathrm{spiral}(x).
$$ 
Finally, we compute the marginal set advantage for each individual search trace: 
\begin{align}   
    \label{eq: search_trace_set_rl_advantage}
    A^\sharp_\mathrm{marg}(x, y; f_\mathrm{spiral}) = \frac{1}{|\mathcal{G}(y)|} \sum_{G \in \mathcal{G}(y)}A^\sharp(x, G; f_\mathrm{spiral}),
\end{align} 
where $\mathcal{G}(y) = \{G \in \{G_1,\dots,G_K\} \mid y \in G \}$ is the collection of sets containing $y$.    

To optimize the aggregation traces, we compute the standard RL advantage. The advantage of each aggregation trace is computed within its respective set, rather than across all sets: 
\begin{align}
    \label{eq: final_trace_std_rl_advantage}
    A(x, y_{j}^{G_i}) = r(x,y_{j}^{G_i}) - \frac{1}{N_2}\sum_{k=1}^{N_2} r(x, y_{k}^{G_i}).
\end{align} 
We observe that using a per-set baseline leads to significantly lower variance, which helps the model learn how to reliably aggregate traces. 

\medskip

Having computed the advantages for each generation across both levels, we can now plug the advantages into standard reinforcement learning algorithms to optimize the model, following the recipe proposed in \cite{orney2026polyepotrainingexploratoryreasoning}. In our implementations, we substitute the standard advantage function in REINFORCE with importance sampling using Tinker \cite{tml2026tinker}. The pseudocode and additional implementation details for our final algorithm can be found in \S\ref{app: implementation_details}.

\subsection{Discussion}
In \ours{}, we optimize the search traces using the set RL recipe proposed in \cite{orney2026polyepotrainingexploratoryreasoning}. However, \cite{orney2026polyepotrainingexploratoryreasoning} prove that their gradient estimator is unbiased under the assumption that the set objective is symmetric in its arguments. In our case, this is not necessarily true; $f_\mathrm{spiral}(x, y_{1:n}) = \mathbb{E}_{y_* \sim \pi_\theta( \cdot \mid x, y_{1:n})}[r(x, y_*)]$ need not be equal to $f_\mathrm{spiral}(x, y_{\sigma(1), \cdots, \sigma(n)})$ where $\sigma$ is some permutation of $\{1,\cdots,n\}$, since the language model's distribution can depend on the ordering of the candidate search traces in its context. In fact, prior works have observed that language models are notorious for ordering biases \cite{pezeshkpour-hruschka-2024-large}. In our implementation of \ours{}, we uniformly sample $K$ sets from the collection of \textit{unordered} tuples, which implicitly assumes that $f_\mathrm{spiral}$ is symmetric. This is to ensure that the model is forced to aggregate sets of higher variety in terms of their constituents. Nevertheless, in this section, we extend the analysis in \cite{orney2026polyepotrainingexploratoryreasoning} to show that our set RL gradient estimator is still unbiased if we sample $K$ sets from all $\frac{N!}{(N-n)!}$ possible ordered tuples of size $n$. We show this via the following proposition (the proof, following a similar strategy to \cite{orney2026polyepotrainingexploratoryreasoning}, is in \S\ref{app: proofs}): 

\begin{prop}
\label{prop:set_rl_u_stat_unbiased}
Fix a prompt $x$, and let $y_1,\cdots,y_N \overset{\mathrm{i.i.d.}}{\sim} \pi_\theta(\cdot \mid x)$ be our independently sampled $N$ generations and let $f : \mathcal{X} \times \mathcal{Y}^{\oplus n} \rightarrow \mathbb{R}$ be our set objective. Then, $$\mathbb{E}[
\sum_{i=1}^N
\nabla_\theta \log \pi_\theta(y_i \mid x)\,
\widehat{A_{\mathrm{marg}}^\sharp}(x,y_i; f)]
=
M  \nabla_\theta
\mathbb{E}_{y_{1:n} \sim \pi_\theta(\cdot \mid x)}
N\bigl[f(x,y_{1:n})\bigr],$$ where $M \in \mathbb{R}_{>0}$ is a scaling factor. In particular, when we sample, uniformly, $K > 1$ sets without replacement from $K_\mathrm{all} := \frac{N!}{(N -n)!}$ sets, we have that $$M = \frac{N}{n}q_K-1,$$ where 
\( q_K := \Pr_{\mathcal S_K}\!\left(C_i(\mathcal S_K)>0\right) = 1- \frac{\binom{(N-1)_n}{K}}{\binom{(N)_n}{K}}\) and \((N)_n := \frac{N!}{(N-n)!}.\) Consequently, after scaling the learning rate, the estimator is an unbiased estimator of the set RL gradient.
\end{prop}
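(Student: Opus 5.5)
The plan is to compute the expectation of the estimator one index $i$ at a time. I will first condition on the random collection of sampled ordered tuples $\mathcal S_K$, which is drawn independently of $y_{1:N}$, and then average over $\mathcal S_K$. I read the right-hand side as $M\,\nabla_\theta \mathbb{E}_{y_{1:n}}[f(x,y_{1:n})]$ and write $F(\theta):=\mathbb{E}_{y_{1:n}\sim\pi_\theta}[f(x,y_{1:n})]$.

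\textbf{Notation.} Let $s_i:=\nabla_\theta\log\pi_\theta(y_i\mid x)$ be the score of generation $i$. For an ordered tuple $T=(t_1,\dots,t_n)$ of distinct indices, write $f(T):=f(x,y_{t_1},\dots,y_{t_n})$. Let $C_i(\mathcal S_K)$ be the number of sampled tuples that contain $i$. When $C_i>0$, the marginal advantage is
\[
\widehat{A^\sharp_{\mathrm{marg}}}(x,y_i;f)=\frac{1}{C_i}\sum_{T\in\mathcal S_K,\,T\ni i} f(T)\;-\;\frac{1}{K}\sum_{T'\in\mathcal S_K} f(T').
\]
When $C_i=0$ I take it to be $0$, since trace $i$ receives no update in that case.

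\textbf{Step 1: score identities for a single tuple.} I need two facts about $\mathbb{E}_y[s_i f(T)]$.
\begin{itemize}
\item If $i\notin T$, then $\mathbb{E}_y[s_i f(T)]=0$. This holds because $y_i$ is independent of $y_T$ and $\mathbb{E}[s_i]=0$.
\item If $i\in T$ at position $p$, then $\mathbb{E}_y[s_i f(T)]=g_p$, where $g_p:=\mathbb{E}_{y_{1:n}}[\nabla_\theta\log\pi_\theta(y_p\mid x)\,f(x,y_{1:n})]$. This uses exchangeability of the i.i.d.\ draws.
\end{itemize}
The log-derivative trick gives $\sum_{p=1}^n g_p=\nabla_\theta F$. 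Because $f$ need not be symmetric, the individual $g_p$ can differ, and this is exactly where the argument of \cite{orney2026polyepotrainingexploratoryreasoning} has to be modified.

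\textbf{Step 2: averaging over positions (the main obstacle).} Let $\sigma$ be a permutation of coordinates, acting on every tuple simultaneously. Such a $\sigma$ maps the uniform law on $K$-subsets of ordered tuples to itself and leaves every $C_i$ unchanged. Consequently, the position of $i$ inside a uniformly chosen sampled tuple containing $i$ is uniform on $\{1,\dots,n\}$ given $C_i$, so each $g_p$ gets weight $1/n$. This yields
\[
\mathbb{E}\Bigl[\mathbf 1\{C_i>0\}\tfrac{1}{C_i}\sum_{T\ni i} s_i f(T)\Bigr]=\tfrac1n\Pr(C_i>0)\,\nabla_\theta F=\tfrac{q_K}{n}\nabla_\theta F .
\]
The same symmetrization handles the baseline term. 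By Step 1, only the tuples $T'\ni i$ contribute to it, giving
\[
-\mathbb{E}\Bigl[\tfrac{1}{K}\sum_{T'\ni i} s_i f(T')\Bigr]=-\tfrac{1}{nK}\,\mathbb{E}[C_i]\,\nabla_\theta F .
\]
The indicator $\mathbf 1\{C_i>0\}$ is harmless here because the sum is empty when $C_i=0$. Stating the symmetry argument carefully, as a conditional expectation given $C_i$, is the step I expect to need the most care.

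\textbf{Step 3: summing over $i$.} Every sampled tuple has exactly $n$ members, so $\sum_{i=1}^N C_i=nK$ deterministically. Summing the two contributions over $i$ then gives
\[
\tfrac1n\bigl(Nq_K-n\bigr)\nabla_\theta F=\Bigl(\tfrac Nn q_K-1\Bigr)\nabla_\theta F .
\]
It remains to compute $q_K$. There are $(N)_n$ ordered tuples in total and $(N-1)_n$ of them avoid $i$, so
\[
q_K=1-\binom{(N-1)_n}{K}\Big/\binom{(N)_n}{K},
\]
which matches the statement.

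\textbf{Step 4: positivity of $M$.} Finally I check that $M>0$. Since $\mathbb{E}[C_i]=Kn/N$, the claim $M>0$ is equivalent to $\Pr(C_i>0)>\mathbb{E}[C_i]/K$. This holds whenever $K>1$: $C_i\le K$ always, and the event $0<C_i<K$ has positive probability, so $\mathbb{E}[C_i]<K\Pr(C_i>0)$. Therefore rescaling the learning rate by $1/M$ yields an unbiased estimator of the set RL gradient.
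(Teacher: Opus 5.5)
Your proof is correct, but it organizes the computation differently from the paper. The paper uses the same split into a tuple term and a baseline term. Its symmetry, however, is over \emph{tuples}: by uniform sampling, $\mathbb{E}_{\mathcal T}[\mathbf 1\{S\in\mathcal T\}/C_i]$ is the same for every ordered tuple $S$ containing $i$, so it equals $q_K/d$ with $d=n(N-1)_{n-1}$. The paper then rewrites $\sum_i\sum_{S\ni i}$ as $\sum_S\sum_{i\in S}$ and applies the score identity to the whole tuple, $\mathbb{E}\bigl[f_S\sum_{i\in S}\nabla_\theta\log\pi_\theta(y_i\mid x)\bigr]=\nabla_\theta F$, so positional gradients never appear. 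The baseline is handled the same way, with weight $1/K_{\mathrm{all}}$ for each pair $(i,U)$ with $i\in U$.

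You instead fix $i$, introduce the positional gradients $g_p$, and use invariance under coordinate permutations to show that each position receives weight $1/n$. Your baseline term then needs only $\mathbb{E}[C_i]$ and the deterministic identity $\sum_i C_i=nK$.

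The two routes buy different things. In the paper's route the possible asymmetry of $f$ is absorbed automatically, because the members of each tuple are summed before the score identity is invoked. Your route gives the sharper per-trace statement that each generation's expected update is $\frac1n\bigl(q_K-\frac nN\bigr)\nabla_\theta F$. It also shows exactly where non-symmetry of $f$ enters and why sampling ordered tuples uniformly neutralizes it.

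Your Step 4 proves $q_K>n/N$, which the paper only asserts, but it needs $N>n$ as well as $K>1$. The event $0<C_i<K$ can have positive probability only if some ordered tuple avoids $i$, i.e.\ $(N-1)_n\ge 1$. If $N=n$, then $C_i\equiv K$ and $M=0$, which is why the paper imposes $N>n$ alongside $K>1$.
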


\medskip

Additionally, observe that we fixed the size of the sets of search traces to be $n$ even though, at test-time, we can scale the number of parallel generations that we put into the context of the model when sampling its aggregate generation. This is done so that our training is stable and not sample expensive. However, in principle, one could define $f^{(n)}_\mathrm{spiral}(x, y_1,\cdots,y_n) = \mathbb{E}_{y_* \sim \pi_\theta(\cdot \mid x, y_{1:n})}[r(x, y_*)]$ and optimize $f(x, y_{1:N_1}) = \sum_{n=1}^{N} f^{(n)}_\mathrm{spiral}(x, y_{1:n})$ with $N$ not exceeding the total number of search traces sampled; this objective trains the model to generate and aggregate sets of varying size (from $n=1$ to $n = N$). The recipe for optimizing this objective would be the same as \ours{}, although one would require constructing a large number of sets of varying sizes and sampling aggregation traces from each which would be expensive. On the other hand, one can also scale the pipeline illustrated in \cref{fig:inference-compute-pipeline} by allowing the model to recursively sample and aggregate for more than just two steps. In our empirical evaluations, we will show that \ours{}, despite training on two steps, enables significantly better scaling under recursive self-aggregation \cite{venkatraman2025recursive}, by training the model to sample useful search traces and effective aggregation traces at any step. 

\medskip

\section{Experiments}

In this section, we empirically evaluate \ours{}. In particular, we evaluate on mathematical reasoning tasks. Our goal is to study whether \ours{} enables the model to use test-time compute more effectively. We use \texttt{Qwen3-4b-Instruct-2507} as our base model and train on a filtered subset of \texttt{POLARIS-53k} \cite{Polaris2025}, a dataset of mathematical reasoning problems. We use 256 problems per batch, 24 rollouts per problem, and 2 epochs of RL training. We compare against GRPO \cite{shao2024deepseekmathpushinglimitsmathematical, guo2025deepseek, liu2025understandingr1zeroliketrainingcritical}. During data collection, we dynamically sample data until our effective batch size 256 problems (i.e. all 256 problems have non-zero advantage generations that will receive gradient updates). To make our comparisons fair, we ensure that all methods use the same amount of inference compute (i.e. token budget) per problem during training. While \ours{} uses inference compute that is divided across sequential, parallel, and aggregative compute, GRPO only uses sequential inference compute. We discuss the training compute available to each method in greater detail, along with other implementation details in \S\ref{app: implementation_details}.  

\medskip 

We first evaluate the methods' pass@$k$ performance, which is summarized in \ref{fig:pass_at_k}. In these evaluations, we independently sample $k$ attempts per problem and evaluate the average number of problems the model can solve at least once. Note that in this setting, none of the models get the chance to aggregate their generations---we are only scaling independently sampled parallel inference compute. One can view this as an evaluation of each model's performance under parallel compute scaling with an oracle verifier. We observe that the pass@$k$ performance of \ours{} scales more strongly compared to GRPO, showing a gain of up to 11$\times$ higher efficiency.

\begin{figure}
    \centering
    \includegraphics[width=0.98\linewidth]{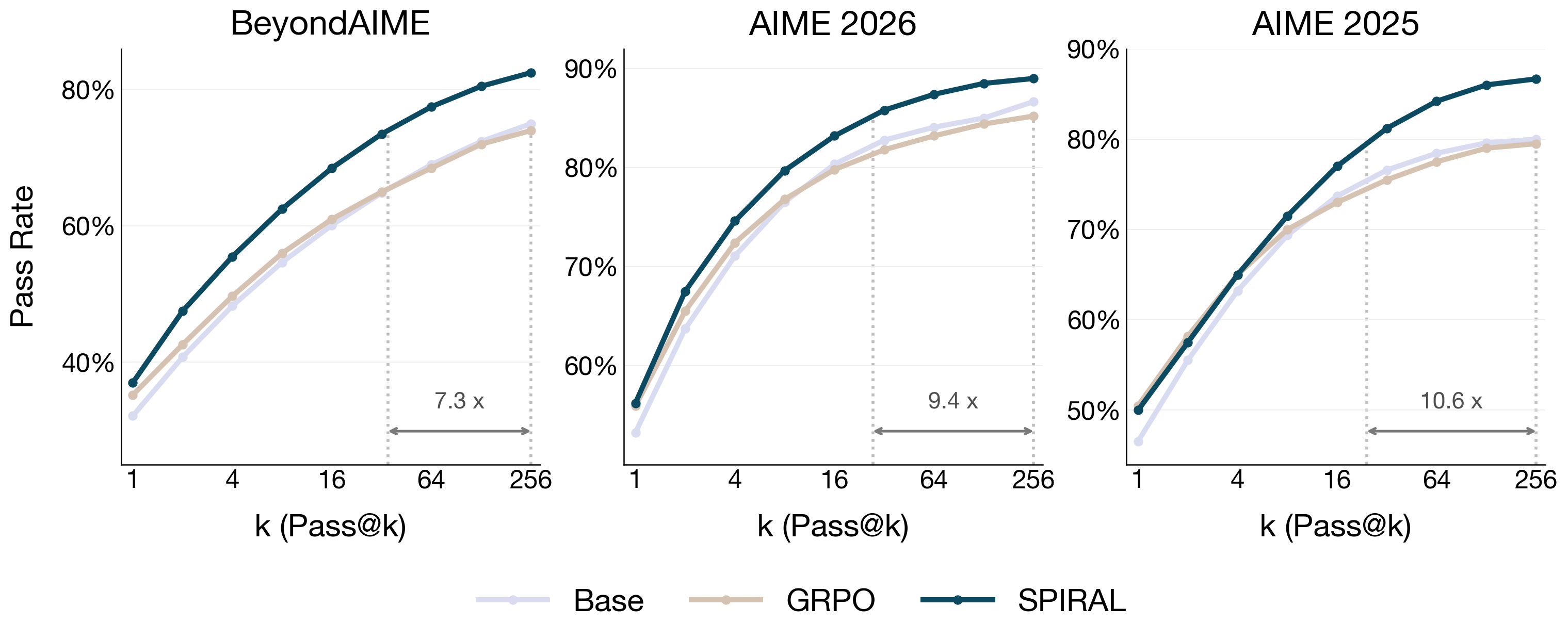}
    \caption{\textbf{Pass@$k$ evaluation on test sets.} The x-axis is the number of independent attempts, $k$, used in the evaluation and the y-axis is the coverage of the test set.}
    \label{fig:pass_at_k}
\end{figure}

\medskip

The pass@$k$ performance suggests that \ours{} achieves higher effective diversity in its search traces. Recall that the search traces are optimized via set reinforcement learning, which rewards all generations in a set equally based on the model's ability to generate a correct aggregation trace when conditioning on that set. As such, the model is not encouraged to immediately collapse its entropy on a generation; search traces that are incorrect but still enable the model to craft high quality aggregation traces are explicitly encouraged by \ours{}. In fact, as \cref{fig: entropy_over_training} shows, the token-level entropy under \ours{} does not collapse as readily as it does under GRPO. Such diversity in parallel samples is particularly helpful in cases where the model can use an oracle verifier (for example, LEAN) to verify each trace and return the optimal one. However, in many practical cases, models may not have access to such oracle verifiers, in which case the model must be good at verifying and refining its own traces. We now evaluate our model's ability to self-aggregate \cite{li2025llmsgeneratebetteranswer, venkatraman2025recursive}. 

\medskip

\begin{figure}
    \centering
        \includegraphics[width=\linewidth]{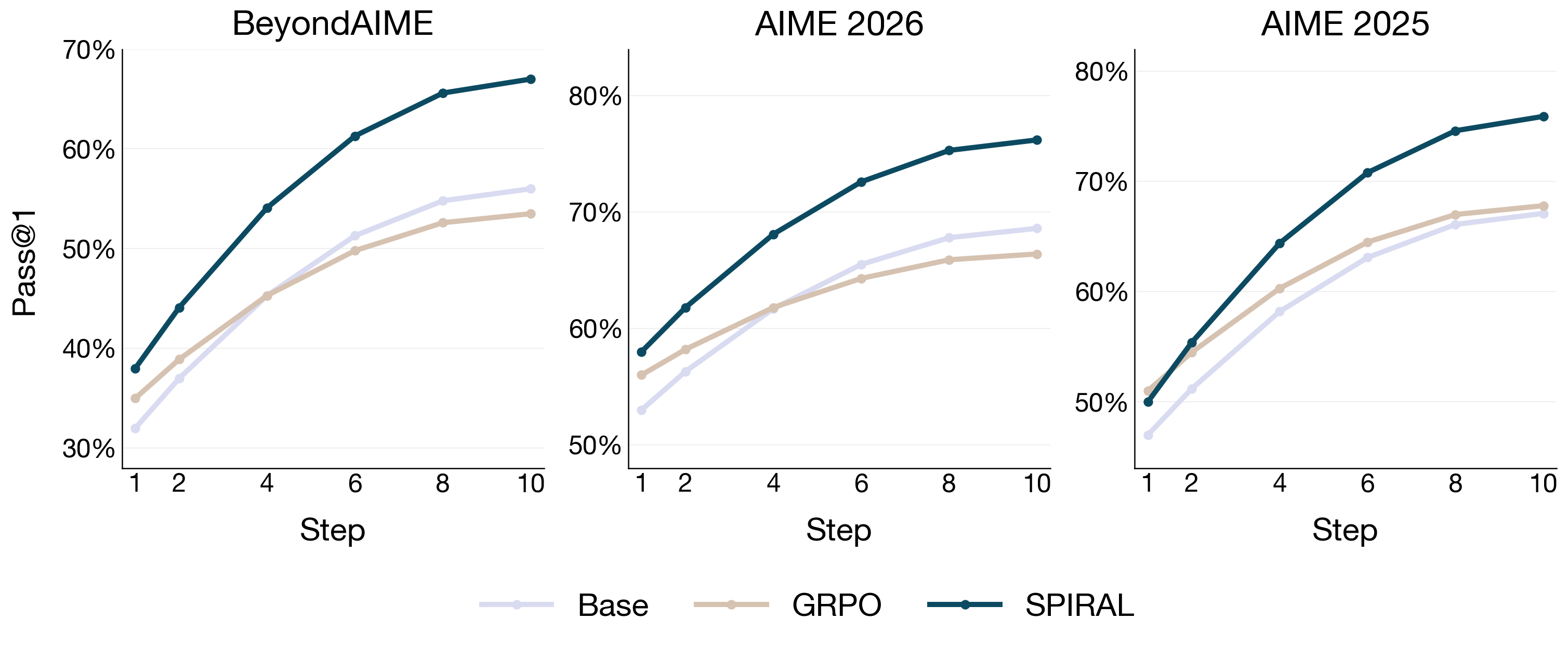}
        \caption{\textbf{Pass@$1$ evaluation under recursive self-aggregation \cite{venkatraman2025recursive}.} The x-axis is the number of recursive self-aggregation steps used and the y-axis is the pass@1 rate.}
        % YL: this result is really good:)
        % YL: I haven't read VJM but do they have a method? e.g. can we plot theirs as another line? or if their method is just what we call Base/GRPO here, then probably relabel to Base+VJM, GRPO+VJM (or whatever their method name is)
        \label{fig:pass_at_1_with_aggregation}
\end{figure}

Recursive self-aggregation (RSA) \cite{venkatraman2025recursive} is a test-time compute method that repeatedly converts parallel candidate traces into a new aggregated trace. The results are shown in \cref{fig:pass_at_1_with_aggregation}. At the first level, we sample a population of independent candidate traces. At each subsequent level, we partition the traces from the previous level into groups, prompt the model with the original problem and the four candidate traces, and sample one aggregated solution for each group. We grade only the final aggregated traces at each level, yielding the pass@1 performance of the recursive aggregation pipeline as a function of the number of aggregation steps. In our experiments, we used a population size of $8$ parallel traces at each step and set size $4$. We find that \ours{} benefits substantially more from recursive self-aggregation than both the base model and GRPO, achieving up to 13.5\% higher performance. This suggests that \ours{} learns search and refinement behaviors that are better suited to scaling parallel and aggregative inference compute at test time.

\begin{figure}[!t]
\centering

\includegraphics[width=0.5\linewidth]{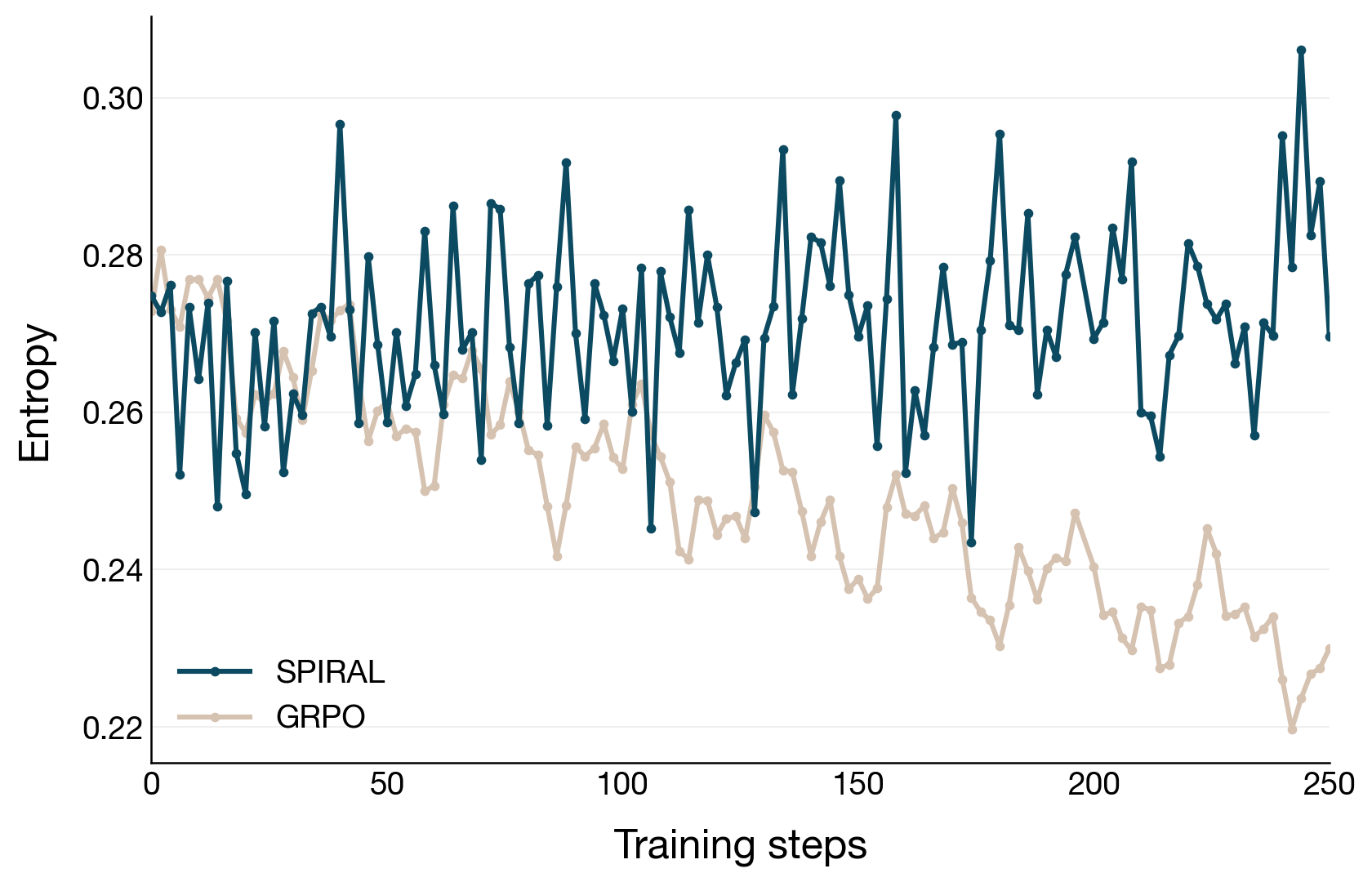}
\caption{\textbf{Token-level entropy over training.} For \ours{}, we plot the entropy over the search traces only to make the comparison to GRPO fair.}
\label{fig: entropy_over_training}

\vspace{1em}

\includegraphics[width=0.98\linewidth]{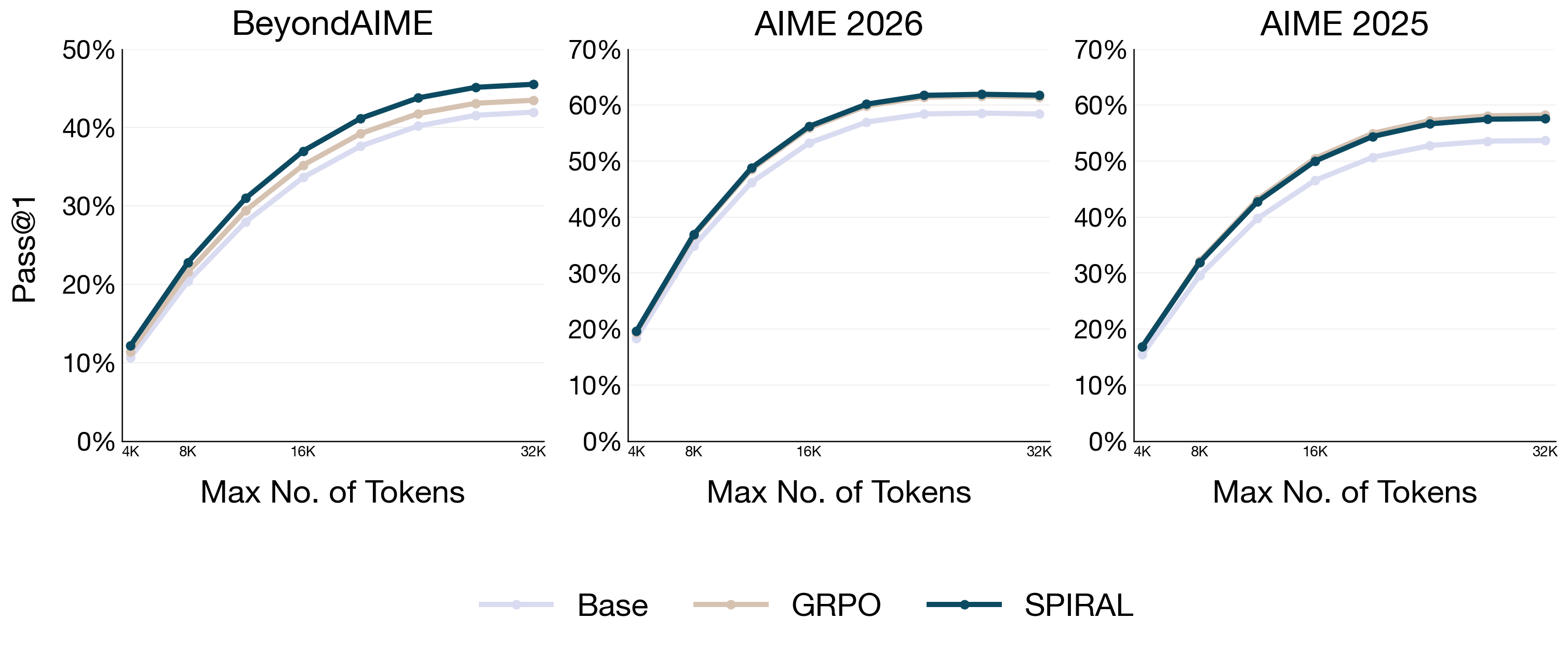}
\caption{\textbf{Comparison of models under scaling sequential compute.} The x-axis is the maximum number of tokens the model is allowed to sample within a chain-of-thought and the y-axis is the pass@1 rate.}
\label{fig:sequential_compute_scaling}

\end{figure}

\begin{figure}[!t]
\centering
\includegraphics[width=0.98\linewidth]{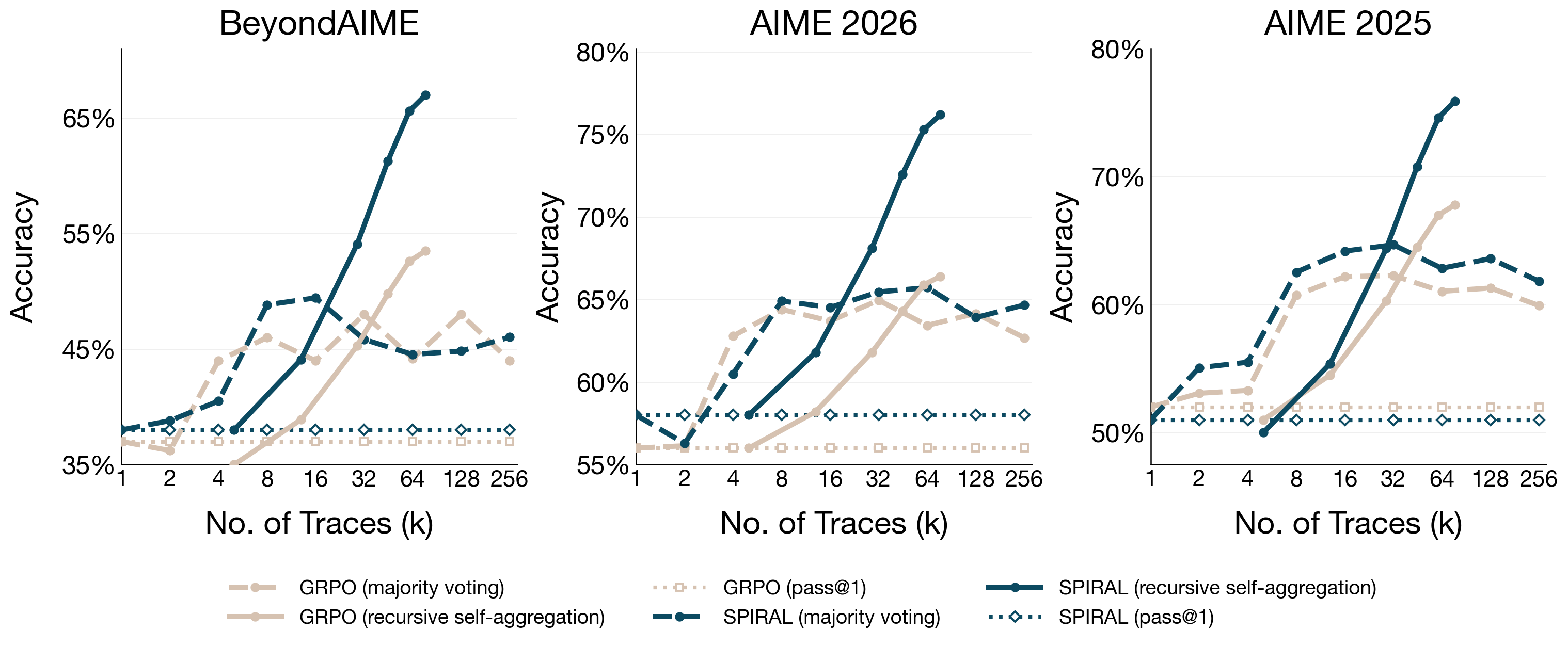}
\caption{\textbf{Comparison of inference methods scaling parallel traces.} The x-axis is the number of parallel traces sampled. GRPO is not trained to aggregate, whereas \ours{} trains a model to search and aggregate. Majority voting is a rule-based aggregation procedure whereas self-aggregation is model-based. We plot pass@1 value for reference.}
\label{fig:comparison_of_parallel_inference_methods}

\vspace{1em}

\includegraphics[width=0.9 \linewidth]{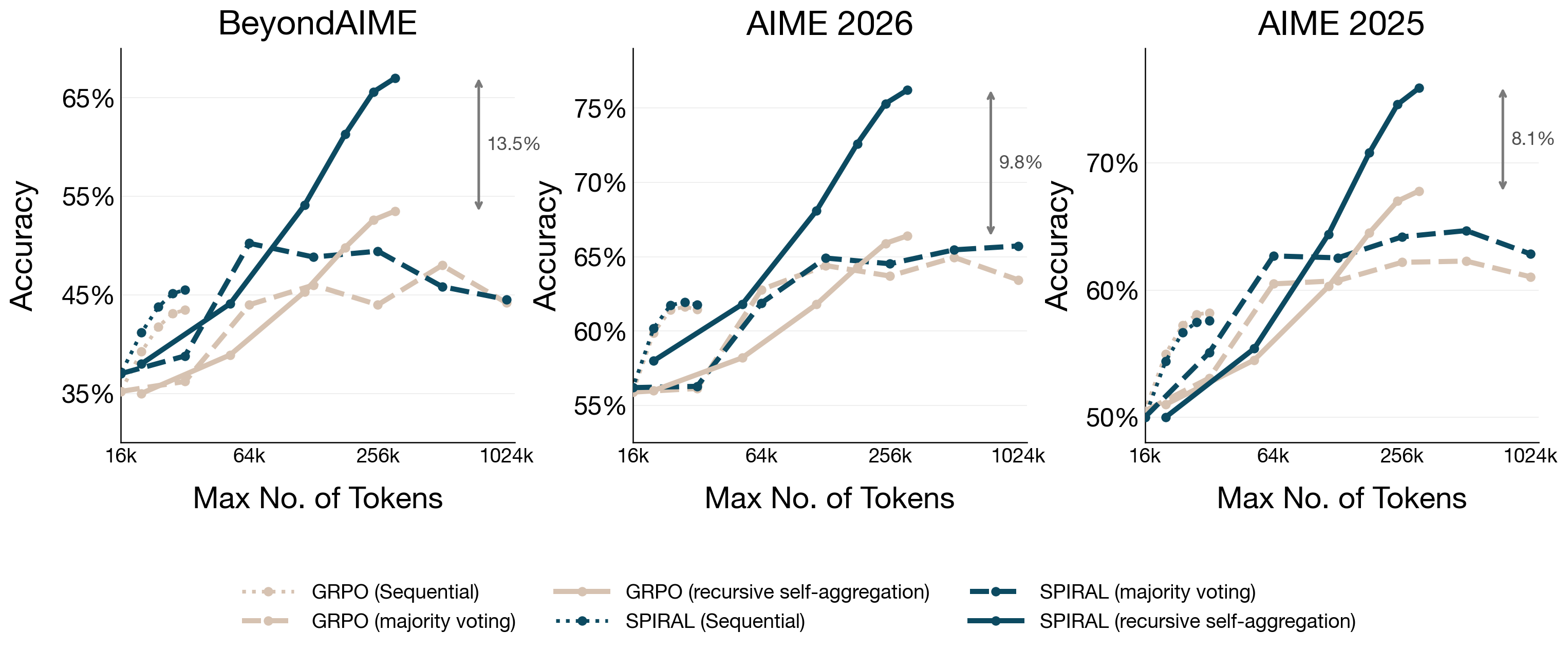}
\caption{\textbf{Comparison of inference compute scaling and model pairs against token usage.} The x-axis is the maximum token budget provided to each model and inference compute method pair. Sequential compute scaling is up to 32k tokens due to the context length of our base model.}
\label{fig: meta_comparison_vs_tokens}
\end{figure}

\medskip

Next, we ask how well each model performs as we only scale sequential compute. In this setting, we do not scale either parallel compute or aggregative compute. We evaluate each method's accuracy on the test sets as we allow it to sample increasingly longer traces. The results are visualized in \cref{fig:sequential_compute_scaling}. All methods perform approximately similarly under sequential compute scaling; however, the gap between either pass@$k$ or self-aggregation scaling and sequential compute scaling is quite large for all models.

\medskip

Intuitively, parallel and aggregative compute are helpful when a model needs to be able to reason over long sequences (larger number of recursive steps enables this), search across various parallel ideas (larger population size enables this), and iteratively determine which traces to revise or continue developing via aggregation. Next, we attempt to understand how important each of these components are. First, we ask: how much better is self-aggregation compared to rule-based aggregation. To study this, we compare the performance of each model under recursive self-aggregation with majority voting. Each test-time compute method is allowed to sample the same number of independent parallel traces. The majority@$k$ performance is calculated by sampling $k$ independent search traces and then grading the answer selected by a majority of the generations. As shown in \cref{fig:comparison_of_parallel_inference_methods}, recursive self-aggregation scales better with parallel inference compute than majority@$k$. Both GRPO and \ours{} achieves similar results under majority voting, but \ours{} achieves substantially better performance under recursive self-aggregation. This suggests that training a model to search and aggregate enables better test-time scaling than using rule-based aggregation. 

Finally, we make a comparison of the various inference compute methods in terms of the efficiency in their token usage. In this comparison, we look at the performance of each model and inference compute method under a fixed token budget. The results are visualized in \cref{fig: meta_comparison_vs_tokens}. Sequential compute could be scaled only up to a certain limit beyond which there is significant performance degradation due to the context length of the base model (i.e. \texttt{Qwen3-4b-Instruct-2507}). In our experiments on majority voting, the minimum number of traces we sampled is 4 and maximum number of tokens we allowed the traces to use is 16k, which is why majority voting requires at least a 64k token budget. Similarly, for recursive self-aggregation, we sampled a population of 8 traces, constructed sets of size 4, and recursed over at least 1 step, leading to a minimum budget of 36k tokens. We observe that while sequential compute scaling can hit a wall, other methods that leverage parallel and aggregative compute like majority voting and recursive self-aggregation can be scaled for much longer with larger token budgets. Between these, recursive self-aggregation scales more favorably in terms of performance, showing that model-based aggregation can be more powerful than hand-designed rule based ones. Finally, \ours{} scales significantly better under recursive self-aggregation than majority voting,  suggesting that end-to-end learning of the primitives enable better performance. 

\section{Related Work}

\paragraph{Policy Gradient Methods.}
Policy gradient methods \cite{10.5555/3009657.3009806, 10.5555/3312046, NIPS2001_4b86abe4} are a foundational class of reinforcement learning algorithms that directly optimize a policy to maximize expected reward. Advances in variance reduction and sample efficiency \cite{BHATNAGAR20092471, degris2013offpolicyactorcritic, DBLP:journals/corr/LillicrapHPHETS15, wang2017sampleefficientactorcriticexperience, schulman2017trustregionpolicyoptimization, schulman2017proximalpolicyoptimizationalgorithms} have made these methods widely adopted for language model (LM) fine-tuning. In the LM setting, policy-gradient methods often omit a learned critic and instead rely on empirical advantage estimates computed from sampled generations \citep{guo2025deepseek, yu2025dapoopensourcellmreinforcement, zheng2025groupsequencepolicyoptimization, liu2025understandingr1zeroliketrainingcritical, chen2025minimax, tajwar2026maxrl}.

\medskip

\paragraph{Inference Compute.}
Inference compute has a rich history in artificial intelligence. In game-playing systems such as AlphaGo and AlphaZero, inference compute is scaled through simulation-based search for Go, chess, and shogi \cite{silver2017mastering}. Similarly, \cite{doi:10.1126/science.aay2400} scale search algorithms for poker. More recently, reasoning models \cite{NEURIPS2022_639a9a17,shao2024deepseekmathpushinglimitsmathematical, openai2026openaio1card, comanici2025gemini} scale inference compute through thinking tokens, allowing models to deliberate before producing a final answer \cite{openai2026openaio1card, guo2025deepseek}, as well as to perform other internal actions \cite{li2026neuralgarbagecollectionlearning, mao2026forgetrecalllearnablecompression, shaikh2025creatinggeneralusermodels, shaikh2026learningactionpredictorshumancomputer}. In this setting, inference compute is typically allocated sequentially within a single chain of thought.

\medskip

A complementary approach scales inference compute across parallel traces \cite{ning2024skeleton, yao2023tree, brown2024large, madaan2023selfrefineiterativerefinementselffeedback, kim2026scalingtesttimecomputeagentic, li2025stesttimescaling}. Self-consistency samples multiple reasoning traces in parallel and aggregates them by selecting the answer that appears most frequently among the samples \citep{wang2022self}. Best-of-$N$ methods similarly sample several traces independently, but use a verifier or reward model to select the final output \citep{cobbe2021trainingverifierssolvemath, wang2024math, brown2024large}. Both self-consistency and best-of-$N$ are filtering-based methods: they reduce a set of parallel traces to a single sampled output. Self-Refine and Reflexion instead scale inference compute sequentially, where a model first generates an attempt, critiques or reflects on that attempt, and then conditions on the resulting feedback to generate a revised solution \citep{madaan2023selfrefineiterativerefinementselffeedback, shinn2023reflexion}. In contrast to filtering-based approaches, self-aggregation allows the model to condition on a set of parallel traces and synthesize a new generation \cite{li2025llmsgeneratebetteranswer, venkatraman2025recursive, teng2026atom, schroeder2025threadthinkingdeeperrecursive, lee2025feedbackdescentopenendedtext}. When a value function is available, tree-search methods can further allocate inference compute by filtering or expanding partial traces \citep{yao2023tree}. OpenDeepThink introduces additional structure into parallel reasoning by aggregating traces through pairwise Bradley--Terry comparisons \citep{zhou2026opendeepthinkparallelreasoningbradleyterry}. Several recent works also study parallel sampling followed by evolutionary refinement. For example, PopulationEvolve samples a population of parallel traces, evolves them over several steps using an evolution prompt, and then performs self-consistency-based answer extraction \citep{zhang2025populationevolveparallelsamplingevolutionary}. Other methods use feedback from the environment to support aggregation and refinement \cite{lee2025feedbackdescentopenendedtext, lee2026metaharnessendtoendoptimizationmodel}.

\medskip

These methods primarily operate at test time. During training, the model is typically optimized only for sequential chain-of-thought reasoning, leaving the use of parallel and aggregative inference compute to hand-designed test-time procedures.

\medskip

Another line of work scales parallel thinking by decomposing a task into subtasks rather than by sampling independent solution traces. In these methods, the model decomposes an overall objective into subproblems that can be solved in parallel. \cite{yang2025multiverselanguagemodelssecretly} and \cite{pan2025learningadaptiveparallelreasoning} train language models to decompose tasks into parallelizable subtasks, typically using supervised fine-tuning rather than reinforcement learning. Our work is complementary to this direction: rather than learning to decompose a problem into distinct subtasks, we train a model to generate and use sets of independently sampled reasoning traces.

\medskip

\paragraph{Training for Inference Compute.}
The closest line of work studies how to train models to use parallel inference compute. \cite{wen2025parathinkernativeparallelthinking} consider a setup closely related to ours: a model first generates several independent reasoning traces in parallel, and then conditions on all of them to synthesize a final answer. However, they train this behavior through supervised fine-tuning rather than reinforcement learning. In our setting, set reinforcement learning is the key ingredient that allows the model to learn how to generate effective sets of parallel traces, rather than merely learning how to aggregate a fixed set of traces \citep{hamid2026polychromicobjectivesreinforcementlearning, orney2026polyepotrainingexploratoryreasoning}. PaCoRe also trains models with reinforcement learning for inference-compute scaling, but constrains learning to the synthesis step and does not train the model to generate optimal sets of parallel traces \citep{hu2026pacorelearningscaletesttime}. Similarly, \cite{qi2025learningreasonparallelsamples} and \cite{venkatraman2025recursive} train models to aggregate sets of traces effectively while treating the generation of the input set as fixed. \cite{singh2026v1unifyinggenerationselfverification} train a single model to both generate and verify. However, their framework provides an independent learning signal to each component, whereas our framework trains a unified inference pipeline whose search and aggregation components are coupled through the reward of the final synthesized solution (see \cref{fig:inference-compute-pipeline}).

\medskip

\paragraph{Optimizing over Sets.}
Orthogonally, \cite{tang2025optimizing} introduce the problem of optimizing over $n$ samples and propose algorithms for improving pass@$k$ and self-consistency, or majority@$k$, performance. Since both pass@$k$ and self-consistency are filtering-based objectives, the final selected output is one of the independently sampled generations. This structure enables leave-one-out estimators for reinforcement learning, in which the output selected by pass@$k$ or self-consistency receives higher credit assignment than the other samples in the set. However, this approach does not directly apply to settings such as self-aggregation, where the final output is a new generation synthesized from the set and there is no canonical way to assign individualized credit to each input trace.

\medskip

In contrast, \cite{hamid2026polychromicobjectivesreinforcementlearning} introduce set reinforcement learning, a framework for training models with set-level rewards where the objective does not necessarily provide individualized credit assignment; more specifically, given a set of $n$ actions sampled, one cannot always leave $k < n$ out to understand which specific generation provided credit, because the credit could emerge from more than one action appearing together in the set, as opposed to the existence of individual elements. In this setting, all actions or generations in a set are coupled through a shared learning signal: credit is assigned to the set as a whole, and the goal is to train a policy to sample sets of actions that are collectively useful. We use this framework to train our model to independently sample sets of parallel traces that can then be interleaved and aggregated into a final answer. \cite{orney2026polyepotrainingexploratoryreasoning} propose a general set-RL recipe for training language models, which we adopt in our work. Set-level optimization has also been studied by \cite{gxchen2026usingrewarduncertaintyinduce}, who use reward uncertainty to induce exploration.

\section{Next Steps}

In this paper, we present early empirical results from our implementation of \ours{}. The scale at which we trained is still relatively small; we aim to train a model with at least 8b parameters. There are several additional empirical investigations we plan to pursue. First, we aim to qualitatively analyze the behavior of \ours{}. In particular, we will directly evaluate the diversity of both search traces and aggregation traces sampled by \ours{}, and compare them against those produced by the base model and a GRPO-trained model. We will also examine how the model allocates the inference compute available to it. Our early experiments show that for difficult problems, the model often spends more tokens in search traces to study related problems or simplifications, and in aggregation traces, it spends more compute on verification; we will rigorously test these behaviors. 

\medskip

Second, we plan to independently evaluate the model's verification capabilities. This will help isolate how much of the improvement in aggregation performance comes from changes in the model's ability to identify correct or promising generations. Third, we aim to evaluate \ours{} in combination with other training strategies. For example, one natural baseline is to train search traces with GRPO using their individual rewards, while separately training the model to aggregate over the resulting traces, as done in \cite{venkatraman2025recursive}. Finally, we plan to more rigorously study how performance scales with inference compute by evaluating these models under a variety of test-time harnesses that compose sequential, parallel, and aggregative primitives in different ways.

\section{Conclusion}

We introduced \oursfull{} (\ours{}), a framework that combines set reinforcement learning with standard reinforcement learning to train a model to effectively use sequential, parallel, and aggregative inference compute. Our early results suggest that jointly training a model to use these primitives enables it to better search, verify, and refine its generations, leading to improved scaling of performance at test time.

\section{Acknowledgments}

We are grateful to Thinking Machines for supporting this research through the Tinker Research Grant. All of our experiments were conducted using Tinker \citep{tml2026tinker}, which provided an exceptionally reliable infrastructure for RL training. We thank Satvik Sharma, Suvir Mirchandani, Jensen Gao, Hengyuan Hu, and Jonathan Yang for their feedback when this work was presented at ILIAD Lab, and Yuejiang Liu, Anish Muppidi, Lars Ankile, and Sarthak Kamat for their feedback when it was presented at IRIS Lab. We also thank Kanishk Gandhi for helpful suggestions on dataset curation for RL fine-tuning. We thank Joey Hejna for his insights that were invaluable to the development of the ideas behind our work, especially regarding credit assignment in set RL and scaling inference compute for coding agents. Finally, we thank Yuda Song and Fahim Tajwar --- their advice on using reinforcement learning to train aggregation was very helpful in constructing our algorithm and their codebase in \cite{song2026expandingcapabilitiesreinforcementlearning} was helpful in guiding our own implementation. 

\medskip

This work was supported by Schmidt Sciences, ONR grants N00014-22-1-2621, NSF Award \#1941722, ONR YIP N00014-22-1-2293, DARPA YFA Award \#W911NF2210214, NSF Award \#2125511, and the DARPA ExpMath program. We also thank Google DeepMind for their support with a TPU grant. 

\nocite{*}
\bibliographystyle{alpha}

\bibliography{references} 

@inproceedings{NEURIPS2022_639a9a17,
 author = {Zelikman, Eric and Wu, Yuhuai and Mu, Jesse and Goodman, Noah},
 booktitle = {Advances in Neural Information Processing Systems},
 editor = {S. Koyejo and S. Mohamed and A. Agarwal and D. Belgrave and K. Cho and A. Oh},
 pages = {15476--15488},
 publisher = {Curran Associates, Inc.},
 title = {STaR: Bootstrapping Reasoning With Reasoning},
 url = {https://proceedings.neurips.cc/paper_files/paper/2022/file/639a9a172c044fbb64175b5fad42e9a5-Paper-Conference.pdf},
 volume = {35},
 year = {2022}
}

@inproceedings{10.5555/645531.656005,
author = {Kakade, Sham and Langford, John},
title = {Approximately Optimal Approximate Reinforcement Learning},
year = {2002},
isbn = {1558608737},
publisher = {Morgan Kaufmann Publishers Inc.},
address = {San Francisco, CA, USA},
booktitle = {Proceedings of the Nineteenth International Conference on Machine Learning},
pages = {267–274},
numpages = {8},
series = {ICML '02}
}

@misc{hamid2026polychromicobjectivesreinforcementlearning,
      title={Polychromic Objectives for Reinforcement Learning}, 
      author={Jubayer Ibn Hamid and Ifdita Hasan Orney and Ellen Xu and Chelsea Finn and Dorsa Sadigh},
      year={2026},
      eprint={2509.25424},
      archivePrefix={arXiv},
      primaryClass={cs.LG},
      url={https://arxiv.org/abs/2509.25424}, 
}

@misc{walder2025passkpolicyoptimizationsolving,
      title={Pass@K Policy Optimization: Solving Harder Reinforcement Learning Problems}, 
      author={Christian Walder and Deep Karkhanis},
      year={2025},
      eprint={2505.15201},
      archivePrefix={arXiv},
      primaryClass={cs.LG},
      url={https://arxiv.org/abs/2505.15201}, 
}

@misc{li2025jointlyreinforcingdiversityquality,
      title={Jointly Reinforcing Diversity and Quality in Language Model Generations}, 
      author={Tianjian Li and Yiming Zhang and Ping Yu and Swarnadeep Saha and Daniel Khashabi and Jason Weston and Jack Lanchantin and Tianlu Wang},
      year={2025},
      eprint={2509.02534},
      archivePrefix={arXiv},
      primaryClass={cs.CL},
      url={https://arxiv.org/abs/2509.02534}, 
}

@misc{schulman2017trustregionpolicyoptimization,
      title={Trust Region Policy Optimization}, 
      author={John Schulman and Sergey Levine and Philipp Moritz and Michael I. Jordan and Pieter Abbeel},
      year={2017},
      eprint={1502.05477},
      archivePrefix={arXiv},
      primaryClass={cs.LG},
      url={https://arxiv.org/abs/1502.05477}, 
}

@misc{kazemnejad2025vinepporefiningcreditassignment,
      title={VinePPO: Refining Credit Assignment in RL Training of LLMs}, 
      author={Amirhossein Kazemnejad and Milad Aghajohari and Eva Portelance and Alessandro Sordoni and Siva Reddy and Aaron Courville and Nicolas Le Roux},
      year={2025},
      eprint={2410.01679},
      archivePrefix={arXiv},
      primaryClass={cs.LG},
      url={https://arxiv.org/abs/2410.01679}, 
}

@inproceedings{Kakade2002ApproximatelyOA,
  title={Approximately Optimal Approximate Reinforcement Learning},
  author={Sham M. Kakade and John Langford},
  booktitle={International Conference on Machine Learning},
  year={2002},
  url={https://api.semanticscholar.org/CorpusID:31442909}
}

@misc{schulman2017proximalpolicyoptimizationalgorithms,
      title={Proximal Policy Optimization Algorithms}, 
      author={John Schulman and Filip Wolski and Prafulla Dhariwal and Alec Radford and Oleg Klimov},
      year={2017},
      eprint={1707.06347},
      archivePrefix={arXiv},
      primaryClass={cs.LG},
      url={https://arxiv.org/abs/1707.06347}, 
}

@misc{shao2024deepseekmathpushinglimitsmathematical,
      title={DeepSeekMath: Pushing the Limits of Mathematical Reasoning in Open Language Models}, 
      author={Zhihong Shao and Peiyi Wang and Qihao Zhu and Runxin Xu and Junxiao Song and Xiao Bi and Haowei Zhang and Mingchuan Zhang and Y. K. Li and Y. Wu and Daya Guo},
      year={2024},
      eprint={2402.03300},
      archivePrefix={arXiv},
      primaryClass={cs.CL},
      url={https://arxiv.org/abs/2402.03300}, 
}

@misc{orney2026polyepotrainingexploratoryreasoning,
      title={Poly-EPO: Training Exploratory Reasoning Models}, 
      author={Ifdita Hasan Orney and Jubayer Ibn Hamid and Shreya S Ramanujam and Shirley Wu and Hengyuan Hu and Noah Goodman and Dorsa Sadigh and Chelsea Finn},
      year={2026},
      eprint={2604.17654},
      archivePrefix={arXiv},
      primaryClass={cs.AI},
      url={https://arxiv.org/abs/2604.17654}, 
}

@misc{cui2025entropymechanismreinforcementlearning,
      title={The Entropy Mechanism of Reinforcement Learning for Reasoning Language Models}, 
      author={Ganqu Cui and Yuchen Zhang and Jiacheng Chen and Lifan Yuan and Zhi Wang and Yuxin Zuo and Haozhan Li and Yuchen Fan and Huayu Chen and Weize Chen and Zhiyuan Liu and Hao Peng and Lei Bai and Wanli Ouyang and Yu Cheng and Bowen Zhou and Ning Ding},
      year={2025},
      eprint={2505.22617},
      archivePrefix={arXiv},
      primaryClass={cs.LG},
      url={https://arxiv.org/abs/2505.22617}, 
}

@misc{li2025llmsgeneratebetteranswer,
      title={LLMs Can Generate a Better Answer by Aggregating Their Own Responses}, 
      author={Zichong Li and Xinyu Feng and Yuheng Cai and Zixuan Zhang and Tianyi Liu and Chen Liang and Weizhu Chen and Haoyu Wang and Tuo Zhao},
      year={2025},
      eprint={2503.04104},
      archivePrefix={arXiv},
      primaryClass={cs.CL},
      url={https://arxiv.org/abs/2503.04104}, 
}

@article{venkatraman2025recursive,
  title={Recursive self-aggregation unlocks deep thinking in large language models},
  author={Venkatraman, Siddarth and Jain, Vineet and Mittal, Sarthak and Shah, Vedant and Obando-Ceron, Johan and Bengio, Yoshua and Bartoldson, Brian R and Kailkhura, Bhavya and Lajoie, Guillaume and Berseth, Glen and others},
  journal={arXiv preprint arXiv:2509.26626},
  year={2025}
}

@inproceedings{wang2025mixture,
  title={Mixture-of-agents enhances large language model capabilities},
  author={Wang, Junlin and Wang, Jue and Athiwaratkun, Ben and Zhang, Ce and Zou, James Y},
  booktitle={International Conference on Learning Representations},
  volume={2025},
  pages={33944--33963},
  year={2025}
}

@article{teng2026atom,
  title={Atom of thoughts for markov llm test-time scaling},
  author={Teng, Fengwei and Shi, Quan and Yu, Zhaoyang and Zhang, Jiayi and Luo, Yuyu and Wu, Chenglin and Guo, Zhijiang},
  journal={Advances in Neural Information Processing Systems},
  volume={38},
  pages={74010--74040},
  year={2026}
}

@article{zhang2024chain,
  title={Chain of agents: Large language models collaborating on long-context tasks},
  author={Zhang, Yusen and Sun, Ruoxi and Chen, Yanfei and Pfister, Tomas and Zhang, Rui and Ar{\i}k, Sercan {\"O}},
  journal={Advances in Neural Information Processing Systems},
  volume={37},
  pages={132208--132237},
  year={2024}
}

@inproceedings{ning2024skeleton,
  title={Skeleton-of-thought: Prompting llms for efficient parallel generation},
  author={Ning, Xuefei and Lin, Zinan and Zhou, Zixuan and Wang, Zifu and Yang, Huazhong and Wang, Yu},
  booktitle={International Conference on Learning Representations},
  volume={2024},
  pages={917--967},
  year={2024}
}

@misc{schroeder2025threadthinkingdeeperrecursive,
      title={THREAD: Thinking Deeper with Recursive Spawning}, 
      author={Philip Schroeder and Nathaniel Morgan and Hongyin Luo and James Glass},
      year={2025},
      eprint={2405.17402},
      archivePrefix={arXiv},
      primaryClass={cs.CL},
      url={https://arxiv.org/abs/2405.17402}, 
}

@article{grand2025self,
  title={Self-steering language models},
  author={Grand, Gabriel and Tenenbaum, Joshua B and Mansinghka, Vikash K and Lew, Alexander K and Andreas, Jacob},
  journal={arXiv preprint arXiv:2504.07081},
  year={2025}
}

@inproceedings{du2024improving,
  title={Improving factuality and reasoning in language models through multiagent debate},
  author={Du, Yilun and Li, Shuang and Torralba, Antonio and Tenenbaum, Joshua B and Mordatch, Igor},
  booktitle={Forty-first international conference on machine learning},
  year={2024}
}

@article{yao2023tree,
  title={Tree of thoughts: Deliberate problem solving with large language models},
  author={Yao, Shunyu and Yu, Dian and Zhao, Jeffrey and Shafran, Izhak and Griffiths, Tom and Cao, Yuan and Narasimhan, Karthik},
  journal={Advances in neural information processing systems},
  volume={36},
  pages={11809--11822},
  year={2023}
}

@misc{cobbe2021trainingverifierssolvemath,
      title={Training Verifiers to Solve Math Word Problems}, 
      author={Karl Cobbe and Vineet Kosaraju and Mohammad Bavarian and Mark Chen and Heewoo Jun and Lukasz Kaiser and Matthias Plappert and Jerry Tworek and Jacob Hilton and Reiichiro Nakano and Christopher Hesse and John Schulman},
      year={2021},
      eprint={2110.14168},
      archivePrefix={arXiv},
      primaryClass={cs.LG},
      url={https://arxiv.org/abs/2110.14168}, 
}

@article{silver2017mastering,
  title={Mastering chess and shogi by self-play with a general reinforcement learning algorithm},
  author={Silver, David and Hubert, Thomas and Schrittwieser, Julian and Antonoglou, Ioannis and Lai, Matthew and Guez, Arthur and Lanctot, Marc and Sifre, Laurent and Kumaran, Dharshan and Graepel, Thore and others},
  journal={arXiv preprint arXiv:1712.01815},
  year={2017}
}

@article{wang2022self,
  title={Self-consistency improves chain of thought reasoning in language models},
  author={Wang, Xuezhi and Wei, Jason and Schuurmans, Dale and Le, Quoc and Chi, Ed and Narang, Sharan and Chowdhery, Aakanksha and Zhou, Denny},
  journal={arXiv preprint arXiv:2203.11171},
  year={2022}
}

@misc{openai2026openaio1card,
      title={OpenAI o1 System Card}, 
      author={OpenAI and : and Aaron Jaech and Adam Kalai and Adam Lerer and Adam Richardson and Ahmed El-Kishky and Aiden Low and Alec Helyar and Aleksander Madry and Alex Beutel and Alex Carney and Alex Iftimie and Alex Karpenko and Alex Tachard Passos and Alexander Neitz and Alexander Prokofiev and Alexander Wei and Allison Tam and Ally Bennett and Ananya Kumar and Andre Saraiva and Andrea Vallone and Andrew Duberstein and Andrew Kondrich and Andrey Mishchenko and Andy Applebaum and Angela Jiang and Ashvin Nair and Barret Zoph and Behrooz Ghorbani and Bohan Zhang and Ben Rossen and Benjamin Sokolowsky and Boaz Barak and Bob McGrew and Borys Minaiev and Botao Hao and Bowen Baker and Brandon Houghton and Brandon McKinzie and Brydon Eastman and Camillo Lugaresi and Cary Bassin and Cary Hudson and Chak Ming Li and Charles de Bourcy and Chelsea Voss and Chen Shen and Chong Zhang and Chris Koch and Chris Orsinger and Christopher Hesse and Claudia Fischer and Clive Chan and Dan Roberts and Daniel Kappler and Daniel Levy and Daniel Selsam and David Dohan and David Farhi and David Mely and David Robinson and Dimitris Tsipras and Doug Li and Dragos Oprica and Eben Freeman and Eddie Zhang and Edmund Wong and Elizabeth Proehl and Enoch Cheung and Eric Mitchell and Eric Wallace and Erik Ritter and Evan Mays and Fan Wang and Felipe Petroski Such and Filippo Raso and Florencia Leoni and Foivos Tsimpourlas and Francis Song and Fred von Lohmann and Freddie Sulit and Geoff Salmon and Giambattista Parascandolo and Gildas Chabot and Grace Zhao and Greg Brockman and Guillaume Leclerc and Hadi Salman and Haiming Bao and Hao Sheng and Hart Andrin and Hessam Bagherinezhad and Hongyu Ren and Hunter Lightman and Hyung Won Chung and Ian Kivlichan and Ian O'Connell and Ian Osband and Ignasi Clavera Gilaberte and Ilge Akkaya and Ilya Kostrikov and Ilya Sutskever and Irina Kofman and Jakub Pachocki and James Lennon and Jason Wei and Jean Harb and Jerry Twore and Jiacheng Feng and Jiahui Yu and Jiayi Weng and Jie Tang and Jieqi Yu and Joaquin Quiñonero Candela and Joe Palermo and Joel Parish and Johannes Heidecke and John Hallman and John Rizzo and Jonathan Gordon and Jonathan Uesato and Jonathan Ward and Joost Huizinga and Julie Wang and Kai Chen and Kai Xiao and Karan Singhal and Karina Nguyen and Karl Cobbe and Katy Shi and Kayla Wood and Kendra Rimbach and Keren Gu-Lemberg and Kevin Liu and Kevin Lu and Kevin Stone and Kevin Yu and Lama Ahmad and Lauren Yang and Leo Liu and Leon Maksin and Leyton Ho and Liam Fedus and Lilian Weng and Linden Li and Lindsay McCallum and Lindsey Held and Lorenz Kuhn and Lukas Kondraciuk and Lukasz Kaiser and Luke Metz and Madelaine Boyd and Maja Trebacz and Manas Joglekar and Mark Chen and Marko Tintor and Mason Meyer and Matt Jones and Matt Kaufer and Max Schwarzer and Meghan Shah and Mehmet Yatbaz and Melody Y. Guan and Mengyuan Xu and Mengyuan Yan and Mia Glaese and Mianna Chen and Michael Lampe and Michael Malek and Michele Wang and Michelle Fradin and Mike McClay and Mikhail Pavlov and Miles Wang and Mingxuan Wang and Mira Murati and Mo Bavarian and Mostafa Rohaninejad and Nat McAleese and Neil Chowdhury and Neil Chowdhury and Nick Ryder and Nikolas Tezak and Noam Brown and Ofir Nachum and Oleg Boiko and Oleg Murk and Olivia Watkins and Patrick Chao and Paul Ashbourne and Pavel Izmailov and Peter Zhokhov and Rachel Dias and Rahul Arora and Randall Lin and Rapha Gontijo Lopes and Raz Gaon and Reah Miyara and Reimar Leike and Renny Hwang and Rhythm Garg and Robin Brown and Roshan James and Rui Shu and Ryan Cheu and Ryan Greene and Saachi Jain and Sam Altman and Sam Toizer and Sam Toyer and Samuel Miserendino and Sandhini Agarwal and Santiago Hernandez and Sasha Baker and Scott McKinney and Scottie Yan and Shengjia Zhao and Shengli Hu and Shibani Santurkar and Shraman Ray Chaudhuri and Shuyuan Zhang and Siyuan Fu and Spencer Papay and Steph Lin and Suchir Balaji and Suvansh Sanjeev and Szymon Sidor and Tal Broda and Aidan Clark and Tao Wang and Taylor Gordon and Ted Sanders and Tejal Patwardhan and Thibault Sottiaux and Thomas Degry and Thomas Dimson and Tianhao Zheng and Timur Garipov and Tom Stasi and Trapit Bansal and Trevor Creech and Troy Peterson and Tyna Eloundou and Valerie Qi and Vineet Kosaraju and Vinnie Monaco and Vitchyr Pong and Vlad Fomenko and Weiyi Zheng and Wenda Zhou and Wenting Zhan and Wes McCabe and Wojciech Zaremba and Yann Dubois and Yinghai Lu and Yining Chen and Young Cha and Yu Bai and Yuchen He and Yuchen Zhang and Yunyun Wang and Zheng Shao and Zhuohan Li},
      year={2026},
      eprint={2412.16720},
      archivePrefix={arXiv},
      primaryClass={cs.AI},
      url={https://arxiv.org/abs/2412.16720}, 
}

@article{comanici2025gemini,
  title={Gemini 2.5: Pushing the frontier with advanced reasoning, multimodality, long context, and next generation agentic capabilities},
  author={Comanici, Gheorghe and Bieber, Eric and Schaekermann, Mike and Pasupat, Ice and Sachdeva, Noveen and Dhillon, Inderjit and Blistein, Marcel and Ram, Ori and Zhang, Dan and Rosen, Evan and others},
  journal={arXiv preprint arXiv:2507.06261},
  year={2025}
}

@article{feng2026towards,
  title={Towards autonomous mathematics research},
  author={Feng, Tony and Trinh, Trieu H and Bingham, Garrett and Hwang, Dawsen and Chervonyi, Yuri and Jung, Junehyuk and Lee, Joonkyung and Pagano, Carlo and Kim, Sang-hyun and Pasqualotto, Federico and others},
  journal={arXiv preprint arXiv:2602.10177},
  year={2026}
}

@article{huang2025winning,
  title={Winning gold at imo 2025 with a model-agnostic verification-and-refinement pipeline},
  author={Huang, Yichen and Yang, Lin F},
  journal={arXiv preprint arXiv:2507.15855},
  year={2025}
}

@article{guo2025deepseek,
  title={Deepseek-r1: Incentivizing reasoning capability in llms via reinforcement learning},
  author={Guo, Daya and Yang, Dejian and Zhang, Haowei and Song, Junxiao and Wang, Peiyi and Zhu, Qihao and Xu, Runxin and Zhang, Ruoyu and Ma, Shirong and Bi, Xiao and others},
  journal={arXiv preprint arXiv:2501.12948},
  year={2025}
}

@article{novikov2025alphaevolve,
  title={Alphaevolve: A coding agent for scientific and algorithmic discovery},
  author={Novikov, Alexander and V{\~u}, Ng{\^a}n and Eisenberger, Marvin and Dupont, Emilien and Huang, Po-Sen and Wagner, Adam Zsolt and Shirobokov, Sergey and Kozlovskii, Borislav and Ruiz, Francisco JR and Mehrabian, Abbas and others},
  journal={arXiv preprint arXiv:2506.13131},
  year={2025}
}

@misc{SilverWelcomeTT,
  title  = {Welcome to the Era of Experience},
  author = {Silver, David and Sutton, Richard},
  year   = {2025},
  url    = {https://api.semanticscholar.org/CorpusID:277919528}
}

@misc{gandhi2025cognitivebehaviorsenableselfimproving,
      title={Cognitive Behaviors that Enable Self-Improving Reasoners, or, Four Habits of Highly Effective STaRs}, 
      author={Kanishk Gandhi and Ayush Chakravarthy and Anikait Singh and Nathan Lile and Noah D. Goodman},
      year={2025},
      eprint={2503.01307},
      archivePrefix={arXiv},
      primaryClass={cs.CL},
      url={https://arxiv.org/abs/2503.01307}, 
}

@article{brown2024large,
  title={Large language monkeys: Scaling inference compute with repeated sampling},
  author={Brown, Bradley and Juravsky, Jordan and Ehrlich, Ryan and Clark, Ronald and Le, Quoc V and R{\'e}, Christopher and Mirhoseini, Azalia},
  journal={arXiv preprint arXiv:2407.21787},
  year={2024}
}

@article{snell2024scaling,
  title={Scaling llm test-time compute optimally can be more effective than scaling model parameters},
  author={Snell, Charlie and Lee, Jaehoon and Xu, Kelvin and Kumar, Aviral},
  journal={arXiv preprint arXiv:2408.03314},
  year={2024}
}

@article{yuksekgonul2026learning,
  title={Learning to discover at test time},
  author={Yuksekgonul, Mert and Koceja, Daniel and Li, Xinhao and Bianchi, Federico and McCaleb, Jed and Wang, Xiaolong and Kautz, Jan and Choi, Yejin and Zou, James and Guestrin, Carlos and others},
  journal={arXiv preprint arXiv:2601.16175},
  year={2026}
}

@misc{pan2025learningadaptiveparallelreasoning,
      title={Learning Adaptive Parallel Reasoning with Language Models}, 
      author={Jiayi Pan and Xiuyu Li and Long Lian and Charlie Snell and Yifei Zhou and Adam Yala and Trevor Darrell and Kurt Keutzer and Alane Suhr},
      year={2025},
      eprint={2504.15466},
      archivePrefix={arXiv},
      primaryClass={cs.AI},
      url={https://arxiv.org/abs/2504.15466}, 
}

@article{
doi:10.1126/science.aay2400,
author = {Noam Brown  and Tuomas Sandholm },
title = {Superhuman AI for multiplayer poker},
journal = {Science},
volume = {365},
number = {6456},
pages = {885-890},
year = {2019},
doi = {10.1126/science.aay2400},
URL = {https://www.science.org/doi/abs/10.1126/science.aay2400},
eprint = {https://www.science.org/doi/pdf/10.1126/science.aay2400}}

@inproceedings{wang2024math,
  title={Math-shepherd: Verify and reinforce llms step-by-step without human annotations},
  author={Wang, Peiyi and Li, Lei and Shao, Zhihong and Xu, Runxin and Dai, Damai and Li, Yifei and Chen, Deli and Wu, Yu and Sui, Zhifang},
  booktitle={Proceedings of the 62nd Annual Meeting of the Association for Computational Linguistics (Volume 1: Long Papers)},
  pages={9426--9439},
  year={2024}
}

@misc{madaan2023selfrefineiterativerefinementselffeedback,
      title={Self-Refine: Iterative Refinement with Self-Feedback}, 
      author={Aman Madaan and Niket Tandon and Prakhar Gupta and Skyler Hallinan and Luyu Gao and Sarah Wiegreffe and Uri Alon and Nouha Dziri and Shrimai Prabhumoye and Yiming Yang and Shashank Gupta and Bodhisattwa Prasad Majumder and Katherine Hermann and Sean Welleck and Amir Yazdanbakhsh and Peter Clark},
      year={2023},
      eprint={2303.17651},
      archivePrefix={arXiv},
      primaryClass={cs.CL},
      url={https://arxiv.org/abs/2303.17651}, 
}

@article{shinn2023reflexion,
  title={Reflexion: Language agents with verbal reinforcement learning},
  author={Shinn, Noah and Cassano, Federico and Gopinath, Ashwin and Narasimhan, Karthik and Yao, Shunyu},
  journal={Advances in neural information processing systems},
  volume={36},
  pages={8634--8652},
  year={2023}
}

@misc{zhou2026opendeepthinkparallelreasoningbradleyterry,
      title={OpenDeepThink: Parallel Reasoning via Bradley--Terry Aggregation}, 
      author={Shang Zhou and Wenhao Chai and Kaiyuan Liu and Huanzhi Mao and Qiuyang Mang and Jingbo Shang},
      year={2026},
      eprint={2605.15177},
      archivePrefix={arXiv},
      primaryClass={cs.AI},
      url={https://arxiv.org/abs/2605.15177}, 
}

@misc{zhang2025populationevolveparallelsamplingevolutionary,
      title={Population-Evolve: a Parallel Sampling and Evolutionary Method for LLM Math Reasoning}, 
      author={Yanzhi Zhang and Yitong Duan and Zhaoxi Zhang and Jiyan He and Shuxin Zheng},
      year={2025},
      eprint={2512.19081},
      archivePrefix={arXiv},
      primaryClass={cs.AI},
      url={https://arxiv.org/abs/2512.19081}, 
}

@misc{wen2025parathinkernativeparallelthinking,
      title={ParaThinker: Native Parallel Thinking as a New Paradigm to Scale LLM Test-time Compute}, 
      author={Hao Wen and Yifan Su and Feifei Zhang and Yunxin Liu and Yunhao Liu and Ya-Qin Zhang and Yuanchun Li},
      year={2025},
      eprint={2509.04475},
      archivePrefix={arXiv},
      primaryClass={cs.CL},
      url={https://arxiv.org/abs/2509.04475}, 
}

@misc{hu2026pacorelearningscaletesttime,
      title={PaCoRe: Learning to Scale Test-Time Compute with Parallel Coordinated Reasoning}, 
      author={Jingcheng Hu and Yinmin Zhang and Shijie Shang and Xiaobo Yang and Yue Peng and Zhewei Huang and Hebin Zhou and Xin Wu and Jie Cheng and Fanqi Wan and Xiangwen Kong and Chengyuan Yao and Kaiwen Yan and Ailin Huang and Hongyu Zhou and Qi Han and Zheng Ge and Daxin Jiang and Xiangyu Zhang and Heung-Yeung Shum},
      year={2026},
      eprint={2601.05593},
      archivePrefix={arXiv},
      primaryClass={cs.LG},
      url={https://arxiv.org/abs/2601.05593}, 
}

@misc{yang2025multiverselanguagemodelssecretly,
      title={Multiverse: Your Language Models Secretly Decide How to Parallelize and Merge Generation}, 
      author={Xinyu Yang and Yuwei An and Hongyi Liu and Tianqi Chen and Beidi Chen},
      year={2025},
      eprint={2506.09991},
      archivePrefix={arXiv},
      primaryClass={cs.LG},
      url={https://arxiv.org/abs/2506.09991}, 
}

@misc{qi2025learningreasonparallelsamples,
      title={Learning to Reason Across Parallel Samples for LLM Reasoning}, 
      author={Jianing Qi and Xi Ye and Hao Tang and Zhigang Zhu and Eunsol Choi},
      year={2025},
      eprint={2506.09014},
      archivePrefix={arXiv},
      primaryClass={cs.CL},
      url={https://arxiv.org/abs/2506.09014}, 
}

@article{tang2025optimizing,
  title={Optimizing language models for inference time objectives using reinforcement learning},
  author={Tang, Yunhao and Zheng, Kunhao and Synnaeve, Gabriel and Munos, R{\'e}mi},
  journal={arXiv preprint arXiv:2503.19595},
  year={2025}
}

@inproceedings{10.5555/3009657.3009806,
author = {Sutton, Richard S. and McAllester, David and Singh, Satinder and Mansour, Yishay},
title = {Policy gradient methods for reinforcement learning with function approximation},
year = {1999},
publisher = {MIT Press},
address = {Cambridge, MA, USA},
booktitle = {Proceedings of the 13th International Conference on Neural Information Processing Systems},
pages = {1057–1063},
numpages = {7},
location = {Denver, CO},
series = {NIPS'99}
}

@book{10.5555/3312046,
    author = {Sutton, Richard S. and Barto, Andrew G.},
    title = {Reinforcement Learning: An Introduction},
    year = {2018},
    isbn = {0262039249},
    publisher = {A Bradford Book},
    address = {Cambridge, MA, USA}
}

@inproceedings{NIPS2001_4b86abe4,
     author = {Kakade, Sham M},
     booktitle = {Advances in Neural Information Processing Systems},
     editor = {T. Dietterich and S. Becker and Z. Ghahramani},
     pages = {},
     publisher = {MIT Press},
     title = {A Natural Policy Gradient},
     url = {https://proceedings.neurips.cc/paper_files/paper/2001/file/4b86abe48d358ecf194c56c69108433e-Paper.pdf},
     volume = {14},
     year = {2001}
}

@article{BHATNAGAR20092471,
    title = {Natural actor–critic algorithms},
    journal = {Automatica},
    volume = {45},
    number = {11},
    pages = {2471-2482},
    year = {2009},
    issn = {0005-1098},
    doi = {https://doi.org/10.1016/j.automatica.2009.07.008},
    url = {https://www.sciencedirect.com/science/article/pii/S0005109809003549},
    author = {Shalabh Bhatnagar and Richard S. Sutton and Mohammad Ghavamzadeh and Mark Lee}
}

@misc{degris2013offpolicyactorcritic,
      title={Off-Policy Actor-Critic}, 
      author={Thomas Degris and Martha White and Richard S. Sutton},
      year={2013},
      eprint={1205.4839},
      archivePrefix={arXiv},
      primaryClass={cs.LG},
      url={https://arxiv.org/abs/1205.4839}, 
}

@inproceedings{DBLP:journals/corr/LillicrapHPHETS15,
  author       = {Timothy P. Lillicrap and
                  Jonathan J. Hunt and
                  Alexander Pritzel and
                  Nicolas Heess and
                  Tom Erez and
                  Yuval Tassa and
                  David Silver and
                  Daan Wierstra},
  editor       = {Yoshua Bengio and
                  Yann LeCun},
  title        = {Continuous control with deep reinforcement learning},
  booktitle    = {4th International Conference on Learning Representations, {ICLR} 2016,
                  San Juan, Puerto Rico, May 2-4, 2016, Conference Track Proceedings},
  year         = {2016},
  url          = {http://arxiv.org/abs/1509.02971},
  bibsource    = {dblp computer science bibliography, https://dblp.org}
}

@misc{wang2017sampleefficientactorcriticexperience,
      title={Sample Efficient Actor-Critic with Experience Replay}, 
      author={Ziyu Wang and Victor Bapst and Nicolas Heess and Volodymyr Mnih and Remi Munos and Koray Kavukcuoglu and Nando de Freitas},
      year={2017},
      eprint={1611.01224},
      archivePrefix={arXiv},
      primaryClass={cs.LG},
      url={https://arxiv.org/abs/1611.01224}, 
}

@misc{yu2025dapoopensourcellmreinforcement,
      title={DAPO: An Open-Source LLM Reinforcement Learning System at Scale}, 
      author={Qiying Yu and Zheng Zhang and Ruofei Zhu and Yufeng Yuan and Xiaochen Zuo and Yu Yue and Weinan Dai and Tiantian Fan and Gaohong Liu and Lingjun Liu and Xin Liu and Haibin Lin and Zhiqi Lin and Bole Ma and Guangming Sheng and Yuxuan Tong and Chi Zhang and Mofan Zhang and Wang Zhang and Hang Zhu and Jinhua Zhu and Jiaze Chen and Jiangjie Chen and Chengyi Wang and Hongli Yu and Yuxuan Song and Xiangpeng Wei and Hao Zhou and Jingjing Liu and Wei-Ying Ma and Ya-Qin Zhang and Lin Yan and Mu Qiao and Yonghui Wu and Mingxuan Wang},
      year={2025},
      eprint={2503.14476},
      archivePrefix={arXiv},
      primaryClass={cs.LG},
      url={https://arxiv.org/abs/2503.14476}, 
}

@misc{zheng2025groupsequencepolicyoptimization,
      title={Group Sequence Policy Optimization}, 
      author={Chujie Zheng and Shixuan Liu and Mingze Li and Xiong-Hui Chen and Bowen Yu and Chang Gao and Kai Dang and Yuqiong Liu and Rui Men and An Yang and Jingren Zhou and Junyang Lin},
      year={2025},
      eprint={2507.18071},
      archivePrefix={arXiv},
      primaryClass={cs.LG},
      url={https://arxiv.org/abs/2507.18071}, 
}

@misc{liu2025understandingr1zeroliketrainingcritical,
      title={Understanding R1-Zero-Like Training: A Critical Perspective}, 
      author={Zichen Liu and Changyu Chen and Wenjun Li and Penghui Qi and Tianyu Pang and Chao Du and Wee Sun Lee and Min Lin},
      year={2025},
      eprint={2503.20783},
      archivePrefix={arXiv},
      primaryClass={cs.LG},
      url={https://arxiv.org/abs/2503.20783}, 
}

@article{chen2025minimax,
  title={Minimax-m1: Scaling test-time compute efficiently with lightning attention},
  author={Chen, Aili and Li, Aonian and Gong, Bangwei and Jiang, Binyang and Fei, Bo and Yang, Bo and Shan, Boji and Yu, Changqing and Wang, Chao and Zhu, Cheng and others},
  journal={arXiv preprint arXiv:2506.13585},
  year={2025}
}

@misc{Polaris2025,
    title = {POLARIS: A Post-Training Recipe for Scaling Reinforcement Learning on Advanced Reasoning Models},
    url = {https://hkunlp.github.io/blog/2025/Polaris},
    author = {An, Chenxin and Xie, Zhihui and Li, Xiaonan and Li, Lei and Zhang, Jun and Gong, Shansan and Zhong, Ming and Xu, Jingjing and Qiu, Xipeng and Wang, Mingxuan and Kong, Lingpeng},
    year = {2025}
}

@inproceedings{
vajipey2025simple,
title={Simple, Scalable Reasoning via Iterated Summarization},
author={Vivek Vajipey and Aditya Tadimeti and Justin Shen and Ben Prystawski and Michael Y. Li and Noah Goodman},
booktitle={ICML 2025 Workshop on Long-Context Foundation Models},
year={2025},
}

@misc{lee2026metaharnessendtoendoptimizationmodel,
      title={Meta-Harness: End-to-End Optimization of Model Harnesses}, 
      author={Yoonho Lee and Roshen Nair and Qizheng Zhang and Kangwook Lee and Omar Khattab and Chelsea Finn},
      year={2026},
      eprint={2603.28052},
      archivePrefix={arXiv},
      primaryClass={cs.AI},
      url={https://arxiv.org/abs/2603.28052}, 
}

@misc{madaan2025rethinkingthinkingtokensllms,
      title={Rethinking Thinking Tokens: LLMs as Improvement Operators}, 
      author={Lovish Madaan and Aniket Didolkar and Suchin Gururangan and John Quan and Ruan Silva and Ruslan Salakhutdinov and Manzil Zaheer and Sanjeev Arora and Anirudh Goyal},
      year={2025},
      eprint={2510.01123},
      archivePrefix={arXiv},
      primaryClass={cs.LG},
      url={https://arxiv.org/abs/2510.01123}, 
}

@misc{kim2026scalingtesttimecomputeagentic,
      title={Scaling Test-Time Compute for Agentic Coding}, 
      author={Joongwon Kim and Wannan Yang and Kelvin Niu and Hongming Zhang and Yun Zhu and Eryk Helenowski and Ruan Silva and Zhengxing Chen and Srinivasan Iyer and Manzil Zaheer and Daniel Fried and Hannaneh Hajishirzi and Sanjeev Arora and Gabriel Synnaeve and Ruslan Salakhutdinov and Anirudh Goyal},
      year={2026},
      eprint={2604.16529},
      archivePrefix={arXiv},
      primaryClass={cs.SE},
      url={https://arxiv.org/abs/2604.16529}, 
}

@inproceedings{pezeshkpour-hruschka-2024-large,
    title = "Large Language Models Sensitivity to The Order of Options in Multiple-Choice Questions",
    author = "Pezeshkpour, Pouya  and
      Hruschka, Estevam",
    editor = "Duh, Kevin  and
      Gomez, Helena  and
      Bethard, Steven",
    booktitle = "Findings of the Association for Computational Linguistics: NAACL 2024",
    month = jun,
    year = "2024",
    address = "Mexico City, Mexico",
    publisher = "Association for Computational Linguistics",
    url = "https://aclanthology.org/2024.findings-naacl.130/",
    doi = "10.18653/v1/2024.findings-naacl.130",
    pages = "2006--2017"
}

@article{chen2026does,
  title={Does reinforcement learning really incentivize reasoning capacity in llms beyond the base model?},
  author={Chen, Zhiqi and Lu, Rui and Zhao, Andrew and Wang, Zhaokai and Yue, Yang and Song, Shiji and Huang, Gao},
  journal={Advances in Neural Information Processing Systems},
  volume={38},
  pages={57654--57689},
  year={2026}
}

@article{zhang2022automatic,
  title={Automatic chain of thought prompting in large language models},
  author={Zhang, Zhuosheng and Zhang, Aston and Li, Mu and Smola, Alex},
  journal={arXiv preprint arXiv:2210.03493},
  year={2022}
}

@misc{research2026composer2technicalreport,
      title={Composer 2 Technical Report}, 
      author={Cursor Research and : and Aaron Chan and Ahmed Shalaby and Alexander Wettig and Aman Sanger and Andrew Zhai and Anurag Ajay and Ashvin Nair and Charlie Snell and Chen Lu and Chen Shen and Emily Jia and Federico Cassano and Hanpeng Liu and Haoyu Chen and Henry Wildermuth and Jacob Jackson and Janet Li and Jediah Katz and Jiajun Yao and Joey Hejna and Josh Warner and Julius Vering and Kevin Frans and Lee Danilek and Less Wright and Lujing Cen and Luke Melas-Kyriazi and Michael Truell and Michiel de Jong and Naman Jain and Nate Schmidt and Nathan Wang and Niklas Muennighoff and Oleg Rybkin and Paul Loh and Phillip Kravtsov and Rishabh Yadav and Sahil Shah and Sam Kottler and Alexander M Rush and Shengtong Zhang and Shomil Jain and Sriram Sankar and Stefan Heule and Stuart H. Sul and Sualeh Asif and Victor Rong and Wanqi Zhu and William Lin and Yuchen Wu and Yuri Volkov and Yury Zemlyanskiy and Zack Holbrook and Zhiyuan Zhang},
      year={2026},
      eprint={2603.24477},
      archivePrefix={arXiv},
      primaryClass={cs.SE},
      url={https://arxiv.org/abs/2603.24477}, 
}

@misc{tml2026tinker,
  author = {Thinking Machines Lab},
  title = {Tinker},
  year = {2026},
  url = {https://thinkingmachines.ai/tinker/},
}

@misc{li2026neuralgarbagecollectionlearning,
      title={Neural Garbage Collection: Learning to Forget while Learning to Reason}, 
      author={Michael Y. Li and Jubayer Ibn Hamid and Emily B. Fox and Noah D. Goodman},
      year={2026},
      eprint={2604.18002},
      archivePrefix={arXiv},
      primaryClass={cs.LG},
      url={https://arxiv.org/abs/2604.18002}, 
}

@misc{mao2026forgetrecalllearnablecompression,
      title={Forget, Then Recall: Learnable Compression and Selective Unfolding via Gist Sparse Attention}, 
      author={Yuzhen Mao and Michael Y. Li and Emily B. Fox},
      year={2026},
      eprint={2604.20920},
      archivePrefix={arXiv},
      primaryClass={cs.LG},
      url={https://arxiv.org/abs/2604.20920}, 
}

@misc{shaikh2025creatinggeneralusermodels,
  title={Creating General User Models from Computer Use}, 
  author={Omar Shaikh and Shardul Sapkota and Shan Rizvi and Eric Horvitz and Joon Sung Park and Diyi Yang and Michael S. Bernstein},
  year={2025},
  eprint={2505.10831},
  archivePrefix={arXiv},
  primaryClass={cs.HC},
  url={https://arxiv.org/abs/2505.10831}, 
}

@misc{shaikh2026learningactionpredictorshumancomputer,
  title={Learning Next Action Predictors from Human-Computer Interaction},
  author={Omar Shaikh and Valentin Teutschbein and Kanishk Gandhi and Yikun Chi and Nick Haber and Thomas Robinson and Nilam Ram and Byron Reeves and Sherry Yang and Michael S. Bernstein and Diyi Yang},
  year={2026},
  eprint={2603.05923},
  archivePrefix={arXiv},
  primaryClass={cs.CL},
  url={https://arxiv.org/abs/2603.05923},
}

@misc{gxchen2026usingrewarduncertaintyinduce,
      title={Using Reward Uncertainty to Induce Diverse Behaviour in Reinforcement Learning}, 
      author={Anthony GX-Chen and Ankit Anand and Gheorghe Comanici and Zaheer Abbas and Eser Aygün and David Smalling and Shibl Mourad and Doina Precup and André Barreto and Mark Rowland},
      year={2026},
      eprint={2606.03962},
      archivePrefix={arXiv},
      primaryClass={cs.LG},
      url={https://arxiv.org/abs/2606.03962}, 
}

@misc{li2025stesttimescaling,
      title={S*: Test Time Scaling for Code Generation}, 
      author={Dacheng Li and Shiyi Cao and Chengkun Cao and Xiuyu Li and Shangyin Tan and Kurt Keutzer and Jiarong Xing and Joseph E. Gonzalez and Ion Stoica},
      year={2025},
      eprint={2502.14382},
      archivePrefix={arXiv},
      primaryClass={cs.LG},
      url={https://arxiv.org/abs/2502.14382}, 
}

@misc{alon2026remarksdisproofunitdistance,
      title={Remarks on the disproof of the unit distance conjecture}, 
      author={Noga Alon and Thomas F. Bloom and W. T. Gowers and Daniel Litt and Will Sawin and Arul Shankar and Jacob Tsimerman and Victor Wang and Melanie Matchett Wood},
      year={2026},
      eprint={2605.20695},
      archivePrefix={arXiv},
      primaryClass={math.CO},
      url={https://arxiv.org/abs/2605.20695}, 
}

@misc{abouzaid2026proof,
      title={First Proof}, 
      author={Mohammed Abouzaid and Andrew J. Blumberg and Martin Hairer and Joe Kileel and Tamara G. Kolda and Paul D. Nelson and Daniel Spielman and Nikhil Srivastava and Rachel Ward and Shmuel Weinberger and Lauren Williams},
      year={2026},
      eprint={2602.05192},
      archivePrefix={arXiv},
      primaryClass={cs.AI},
      url={https://arxiv.org/abs/2602.05192}, 
}

@misc{abouzaid2026proofsecondbatch,
      title={First Proof Second Batch}, 
      author={Mohammed Abouzaid and Nikhil Srivastava and Rachel Ward and Lauren Williams},
      year={2026},
      eprint={2606.18119},
      archivePrefix={arXiv},
      primaryClass={cs.AI},
      url={https://arxiv.org/abs/2606.18119}, 
}

@misc{lopopolo2026harness,
  title        = {Harness Engineering: Leveraging Codex in an Agent-First World},
  author       = {Lopopolo, Ryan},
  year         = {2026},
  month        = feb,
  day          = {11},
  howpublished = {\url{https://openai.com/index/harness-engineering/}},
  note         = {OpenAI},
}

@misc{singh2026v1unifyinggenerationselfverification,
      title={$V_1$: Unifying Generation and Self-Verification for Parallel Reasoners}, 
      author={Harman Singh and Xiuyu Li and Kusha Sareen and Monishwaran Maheswaran and Sijun Tan and Xiaoxia Wu and Junxiong Wang and Alpay Ariyak and Qingyang Wu and Samir Khaki and Rishabh Tiwari and Long Lian and Yucheng Lu and Boyi Li and Alane Suhr and Ben Athiwaratkun and Kurt Keutzer},
      year={2026},
      eprint={2603.04304},
      archivePrefix={arXiv},
      primaryClass={cs.CL},
      url={https://arxiv.org/abs/2603.04304}, 
}

@article{woodruff2026accelerating,
  title={Accelerating scientific research with gemini: Case studies and common techniques},
  author={Woodruff, David P and Cohen-Addad, Vincent and Jain, Lalit and Mao, Jieming and Zuo, Song and Bateni, MohammadHossein and Branzei, Simina and Brenner, Michael P and Chen, Lin and Feng, Ying and others},
  journal={arXiv preprint arXiv:2602.03837},
  year={2026}
}

@misc{lee2025feedbackdescentopenendedtext,
      title={Feedback Descent: Open-Ended Text Optimization via Pairwise Comparison}, 
      author={Yoonho Lee and Joseph Boen and Chelsea Finn},
      year={2025},
      eprint={2511.07919},
      archivePrefix={arXiv},
      primaryClass={cs.LG},
      url={https://arxiv.org/abs/2511.07919}, 
}

@misc{song2026expandingcapabilitiesreinforcementlearning,
      title={Expanding the Capabilities of Reinforcement Learning via Text Feedback}, 
      author={Yuda Song and Lili Chen and Fahim Tajwar and Remi Munos and Deepak Pathak and J. Andrew Bagnell and Aarti Singh and Andrea Zanette},
      year={2026},
      eprint={2602.02482},
      archivePrefix={arXiv},
      primaryClass={cs.LG},
      url={https://arxiv.org/abs/2602.02482}, 
}

@InProceedings{tajwar2026maxrl,
  title     = {Maximum Likelihood Reinforcement Learning},
  author    = {Tajwar, Fahim and Zeng, Guanning and Zhou, Yueer and Song, Yuda
               and Arora, Daman and Jiang, Yiding and Schneider, Jeff and Salakhutdinov, Ruslan
               and Feng, Haiwen and Zanette, Andrea},
  booktitle = {Proceedings of the 43rd International Conference on Machine Learning},
  series    = {Proceedings of Machine Learning Research},
  year      = {2026},
  publisher = {PMLR},
}
\newpage

\appendix 

\section{Implementation Details}
\label{app: implementation_details}

In this section, we discuss various implementation details of our proposed algorithm. We present the pseudocode for \ours{} in \cref{alg: spiral_pseudocode}. In our experiments with the \texttt{Qwen-3-4b} model, we found that the model's response length was quite large. To make training feasible on the compute available to us, we modified the advantage to discourage response length from growing very rapidly. Concretely, suppose the max response length is $L_\mathrm{max}$ and suppose our target response length is $L_\mathrm{target}$ such that $L_\mathrm{target} < L_{\max}$. Then, for any generation $y$ whose length is $L$, we set 
\begin{align}
    \label{eq: length-aware-advantage}
    A_\mathrm{modified}(x, y) = \begin{cases}
        A(x, y) \quad &\text{if }A(x, y) < 0, \text{ or }A(x, y) >0 \text{ and } L < L_\mathrm{target} \\
        0 \quad &\text{if }A(x, y) > 0, L > L_\mathrm{target} \\
    \end{cases}
\end{align} where $A(x, y)$ is the advantage of the generation as computed in \cref{alg: spiral_pseudocode}. Note that search traces are optimized with set reinforcement learning while aggregation traces are optimized with standard reinforcement learning. 

\paragraph{Hyperparameters.} For \ours{}, we sample $N_1 = 8$ search traces per problem, construct $K = 4$ sets of size $n = 4$, and sample $N_2 = 4$ aggregation traces per problem. Across both levels, the maximum number of tokens we allow the model to sample is $L_\mathrm{max} = 4096$. Altogether, per problem, \ours{} samples:
\[
(\underbrace{4096}_{\text{Max length}} \times \underbrace{8}_{\text{\# Search Traces}})
+
(\underbrace{4096}_{\text{Max length}} \times \underbrace{16}_{\text{\# Aggregation Traces}})
=
98304 \text{ tokens}.
\]

On the other hand, for GRPO, we sample 12 generations per problem where we allow the model to sample at most $L_{\max} = 8192$ tokens per problem. For GRPO, since there is no aggregation stage, we allow the model to sample all 8192 tokens at once. Per problem, GRPO samples: 

\[(\underbrace{8192}_{\text{Max length}} \times \underbrace{12}_{\text{\# Generations}}) = 98304 \text{ tokens}.\]

The full set of hyperparameters are are provided in \cref{tab: training_hyperparameters}. 

\begin{table}[H]
\centering
\begin{tabular}{ll}
\toprule
\textbf{Parameter} & \textbf{Value} \\
\midrule
Base model & \texttt{Qwen3-4b-Instruct-2507} \\
Search Traces per prompt & 8 \\
Aggregation Traces per set & 4 \\
Set size (for set RL) & 4 \\
Number of sets (for set RL) & 4 \\
Max prompt length & 1024 \\
Learning rate & $2 \times 10^{-5}$ \\
KL coefficient & 0.0 \\
Entropy coefficient & 0.0 \\
Rollout temperature & 1.0 \\
Prompts per batch & 256 \\
Max response length & 4096 \\
LoRA Rank & 32 \\ 
Device & Tinker \cite{tml2026tinker}\\
\bottomrule
\end{tabular}
\caption{Training hyperparameters.}
\label{tab: training_hyperparameters}
\end{table}

\paragraph{Prompts.} The prompt for the search trace generation is shown in \ref{prompt:search-trace-prompt} and the prompt for the aggregation trace is shown in \ref{prompt:aggregation-trace-prompt}. 
For the aggregation trace, we explicitly ask the model to audit the search traces and synthesize the ideas into a correct output, in order to force the model to not generate a completely new independent attempt that does not pay attention to the search traces.

\begin{promptbox}[label={prompt:search-trace-prompt}]{Search Trace Prompt}
{\footnotesize
\begin{lstlisting}[style=promptstyle]
{problem}

Please reason step by step, and put your final answer within \boxed{}.
\end{lstlisting}
}
\end{promptbox}

\begin{promptbox}[label={prompt:aggregation-trace-prompt}]{Aggregation Trace Prompt}
{\footnotesize
\begin{lstlisting}[style=promptstyle]
You are given a problem and several candidate solution traces. Some candidate solutions may be incorrect, incomplete, or truncated.

Problem:

{problem}

Solution 1:
{candidate_solution 1}

Solution 2:
{candidate_solution 2}

Solution 3:
{candidate_solution 3}

Solution 4:
{candidate_solution 4}

Your task is to audit the candidate traces and then synthesize a correct answer to the original problem.

Follow this process:

1. Briefly audit each candidate trace. Identify promising strategies, equations, intermediate results, or final answers that are explicitly supported by the visible text.
2. Note any apparent errors or unsupported leaps. If a candidate trace is truncated, use only the claims that are explicitly supported by the visible text.
3. Check the useful ideas for correctness against the original problem. Do not assume a candidate is correct just because it is confident or because multiple candidates agree.
4. After the audit, write a line exactly as follows: ### Final Solution:
5. Under that label, synthesize one coherent, self-contained final solution that can be judged without referring back to the candidates. Combine only the valid parts. If the candidates disagree, resolve the disagreement by reasoning from the original problem.

Put the final answer within \boxed{}.
\end{lstlisting}
}
\end{promptbox}

\begin{algorithm}[t]
\caption{\oursfull{} (\ours{})}
\label{alg: spiral_pseudocode}
\begin{algorithmic}[1]
\REQUIRE Policy $\pi_\theta$, batch of inputs $B$, number of search traces $N_1$, set size $n$, number of sampled sets $K$, number of aggregation traces per set $N_2$, reward function $r$
\FOR{each input $x \in B$}
    \STATE Sample search traces $y_1,\dots,y_{N_1} \sim \pi_\theta(\cdot \mid x)$
    \STATE // {\small\textit{Consider all $\binom{N_1}{n}$ sets of size $n$ we can construct from these $N_1$ generations.}}
    \STATE // {\small  \textit{Then, randomly sample $K$ sets from them without replacement}}
    \STATE Sample sets $G_1,\cdots,G_K$ of $n$ unique generations from $\{y_{1:N_1}\}$ without replacement
    \FOR{$l = 1,\cdots,K$}
        \STATE Sample aggregation traces from each set $y^{G_l}_{1}, \cdots, y_{N_2}^{G_l} \sim \pi_\theta(\cdot \mid x, G_l)$
        \STATE Score set $G_l$ using the reward: $\widehat{f}_{\mathrm{spiral}}(x,G_l) \leftarrow \frac{1}{N_2} \sum_{i=1}^{N_2}r(x,y^{G_l}_{i})$
    \ENDFOR
    \STATE Compute set baseline $b = \frac{1}{K}\sum_{l=1}^K \widehat{f}_{\mathrm{spiral}}(x,G_l)$
    \STATE Compute set advantage $\widehat{A^\sharp}(x,G_l;\widehat{f}_{\mathrm{spiral}}) = \widehat{f}_{\mathrm{spiral}}(x,G_l) - b$ for $l=1,\cdots,K$
    \STATE // {\small\textit{Let $\mathcal{G}(y_j)$ be all sampled sets in $G_1,\cdots,G_K$ that contain $y_j$}}
    \STATE Compute marginal set advantage for each search trace $y_j$ where $j \in \{1,\cdots,N\}$ 
    \[
    \widehat{A^\sharp_{\mathrm{marg}}}(x,y_j;\widehat{f}_{\mathrm{spiral}})
    :=
    \frac{1}{|\mathcal{G}(y_j)|}
    \sum_{G \in \mathcal{G}(y_j)}
    \widehat{A^\sharp}(x,G;\widehat{f}_{\mathrm{spiral}})
    \]
    \STATE Compute set-RL gradient for search traces:
    \[
    \hat{g}_{\mathrm{set}}(x)
    \leftarrow
    \sum_{j=1}^N
    \nabla_\theta \log \pi_\theta(y_j \mid x)
    \cdot
    \widehat{A^\sharp_{\mathrm{marg}}}(x,y_j;\widehat{f}_{\mathrm{spiral}})
    \]
    \STATE Compute standard-RL gradient for aggregation traces:
    \[
    \hat{g}_{\mathrm{std}}(x)
    \leftarrow \sum_{l=1}^K \sum_{i=1}^{N_2}\nabla_\theta \log \pi_\theta(y_{i}^{G_l} \mid x, G_l) \left( r(x, y_{i}^{G_l}) - 
    \frac{1}{N_2}
    \sum_{j=1}^{N_2}
    r(x,y^{G_l}_{j}) \right)\]
    \STATE $\hat{g}(x) \leftarrow \hat{g}_{\mathrm{set}}(x) + \hat{g}_{\mathrm{std}}(x)$
\ENDFOR
\STATE $\hat{g} \leftarrow \frac{1}{|B|}\sum_{x \in B}\hat{g}(x)$
\RETURN $\hat{g}$
\end{algorithmic}
\end{algorithm}

\newpage

\section{\ours{} with Different Models}
\label{app: spiral_with_different_models}

In this section, we discuss how one can optimize two different models, $\pi_\theta$ to sample search traces and $\pi_\phi$ to sample aggregation traces, using a minor variant of \ours{}. The key observation is that, in this case, the gradient of \cref{eq: inference-compute-learning-as-RL-problem} is computed as follows:

\begin{align*}
    J(\theta,\phi)
    &=
    \mathbb{E}_{y_{1:n} \sim \pi_\theta(\cdot \mid x)}
    \left[
        \mathbb{E}_{y_* \sim \pi_\phi(\cdot \mid x, y_{1:n})}
        \left[
            r(x,y_*)
        \right]
    \right] \\
    &=
    \mathbb{E}_{y_{1:n} \sim \pi_\theta(\cdot \mid x)}
    \left[
        f^\phi_{\mathrm{spiral}}(x,y_{1:n})
    \right],
\end{align*}
where
\begin{align}
    f^\phi_{\mathrm{spiral}}(x,y_{1:n})
    =
    \mathbb{E}_{y_* \sim \pi_\phi(\cdot \mid x, y_{1:n})}
    \left[
        r(x,y_*)
    \right].
\end{align}
Taking gradients with respect to the search parameters $\theta$ and the aggregation
parameters $\phi$ gives
\begin{align*}
    \nabla_{\theta,\phi} J(\theta,\phi)
    =
    \left(
        \nabla_\theta J(\theta,\phi),
        \nabla_\phi J(\theta,\phi)
    \right).
\end{align*}
The $\theta$-component is
\begin{align*}
    \nabla_\theta J(\theta,\phi)
    &=
    \nabla_\theta
    \mathbb{E}_{y_{1:n} \sim \pi_\theta(\cdot \mid x)}
    \left[
        f^\phi_{\mathrm{spiral}}(x,y_{1:n})
    \right] \\
    &=
    \underbrace{
    \mathbb{E}_{y_{1:n} \sim \pi_\theta(\cdot \mid x)}
    \left[
        f^\phi_{\mathrm{spiral}}(x,y_{1:n})
        \nabla_\theta
        \log \pi_\theta(y_{1:n} \mid x)
    \right]
    }_{\text{Term 1: set RL gradient}}.
\end{align*}
The $\phi$-component is
\begin{align*}
    \nabla_\phi J(\theta,\phi)
    &=
    \mathbb{E}_{y_{1:n} \sim \pi_\theta(\cdot \mid x)}
    \left[
        \nabla_\phi
        \mathbb{E}_{y_* \sim \pi_\phi(\cdot \mid x, y_{1:n})}
        \left[
            r(x,y_*)
        \right]
    \right] \\
    &=
    \underbrace{
    \mathbb{E}_{y_{1:n} \sim \pi_\theta(\cdot \mid x)}
    \left[
        \mathbb{E}_{y_* \sim \pi_\phi(\cdot \mid x, y_{1:n})}
        \left[
            r(x,y_*)
            \nabla_\phi
            \log \pi_\phi(y_* \mid x,y_{1:n})
        \right]
    \right]
    }_{\text{Term 2: standard RL gradient}}.
\end{align*}
Therefore,
\begin{align*}
    \nabla_{\theta,\phi} J(\theta,\phi)
    =
    (
    \underbrace{
    \nabla_\theta
    \mathbb{E}_{y_{1:n} \sim \pi_\theta(\cdot \mid x)}
    \left[
        f^\phi_{\mathrm{spiral}}(x,y_{1:n})
    \right]
    }_{\text{Term 1: set RL gradient}},
    \;
    \underbrace{
    \mathbb{E}_{y_{1:n} \sim \pi_\theta(\cdot \mid x)}
    \left[
        \nabla_\phi
        \mathbb{E}_{y_* \sim \pi_\phi(\cdot \mid x, y_{1:n})}
        \left[
            r(x,y_*)
        \right]
    \right]
    }_{\text{Term 2: standard RL gradient}}).
\end{align*}

\section{Proofs}
\label{app: proofs}

\textbf{Proposition 3.1 (Restated)} Fix a prompt $x$, and let $y_1,\cdots,y_N \overset{\mathrm{i.i.d.}}{\sim} \pi_\theta(\cdot \mid x)$ be our independently sampled $N$ generations and let $f : \mathcal{X} \times \mathcal{Y}^{\oplus n} \rightarrow \mathbb{R}$ be our set objective. Then, $$\mathbb{E}[
\sum_{i=1}^N
\nabla_\theta \log \pi_\theta(y_i \mid x)\,
\widehat{A_{\mathrm{marg}}^\sharp}(x,y_i; f)]
=
M  \nabla_\theta
\mathbb{E}_{y_{1:n} \sim \pi_\theta(\cdot \mid x)}
N\bigl[f(x,y_{1:n})\bigr],$$ where $M \in \mathbb{R}_{>0}$ is a constant scaling factor that depends on the number of sets we construct. In particular, when we sample, uniformly, $K$ sets without replacement from $K_\mathrm{all} := \binom{N}{n}$ sets, we have that $$M = \frac{N}{n}q_K-1,$$ where 
\[ q_K := \Pr_{\mathcal S_K}\!\left(C_i(\mathcal S_K)>0\right) = 1- \frac{\binom{(N-1)_n}{K}}{\binom{(N)_n}{K}}, \quad (N)_n := \frac{N!}{(N-n)!}.\] Consequently, after also taking expectation over $x \sim \mathcal{D}$ and scaling the learning rate, the estimator is an unbiased estimator of the set RL gradient.

\begin{proof}
The proof strategy is largely similar to that in \cite{orney2026polyepotrainingexploratoryreasoning}. Let
\[ \mathcal S := \left\{ (s_1,\ldots,s_n)\in \{1,\ldots,N\}^n \mid s_a\neq s_b \text{ if } a\neq b \right\} \] be the collection of all ordered tuples of \(n\) distinct indices. Define
\(
K_{\mathrm{all}}
:=
|\mathcal S|
=
(N)_n
\). For an ordered tuple \(S=(s_1,\ldots,s_n)\in\mathcal S\), we will use the shorthand \(f_S := f(x,y_{s_1},\ldots,y_{s_n})\). Now, let \(\mathcal T\subseteq\mathcal S\) be a uniformly sampled subset of size \(K\), sampled without replacement and independently of \(y_{1:N}\). For each \(i\in\{1,\ldots,N\}\), define
\[
C_i
:=
C_i(\mathcal T)
:=
\sum_{S\in\mathcal T}\mathbf 1\{i\in S\}.
\] Then, our set-level baseline can be written as
\[
\widehat f_{\mathcal T}(x)
:=
\frac{1}{K}\sum_{S\in\mathcal T}f_S,
\]
and the corresponding normalized marginal set advantage is
\[
\widehat{A^\sharp_{\mathrm{marg}}}(x,y_i;f\mid\mathcal T)
:=
\mathbf 1\{C_i>0\}
\frac{1}{C_i}
\sum_{\substack{S\in\mathcal T\\ i\in S}}
\left(f_S-\widehat f_{\mathcal T}(x)\right).
\] We will also use the following shorthand for convenience:
\[
\nabla_i
:=
\nabla_\theta\log\pi_\theta(y_i\mid x).
\]
We first decompose the left-hand side as
\begin{align*}
&
\mathbb E_{y_{1:N},\mathcal T}
\left[
\sum_{i=1}^N
\nabla_i\,
\widehat{A^\sharp_{\mathrm{marg}}}
(x,y_i;f\mid\mathcal T)
\right]
\\
&=
\underbrace{
\mathbb E_{y_{1:N},\mathcal T}
\left[
\sum_{i=1}^N
\nabla_i
\mathbf 1\{C_i>0\}
\frac{1}{C_i}
\sum_{\substack{S\in\mathcal T\\ i\in S}}
f_S
\right]
}_{\text{Term 1}}
-
\underbrace{
\mathbb E_{y_{1:N},\mathcal T}
\left[
\sum_{i=1}^N
\nabla_i
\mathbf 1\{C_i>0\}
\widehat f_{\mathcal T}(x)
\right]
}_{\text{Term 2}}.
\end{align*}

We first simplify \(\text{Term 1}\). While $C_i$ is the number of sets in $\mathcal{T}$ that contain the specific generation $y_{s_i}$, define \( \mathcal G_i
:=
\{S\in\mathcal S:i\in S\}\) to be the collection of ordered tuples containing \(i\). There are \(n\) possible positions for \(i\), after which the remaining \(n-1\) coordinates can be filled by an ordered selection from the remaining
\(N-1\) indices. Therefore, \( d := |\mathcal G_i| = n(N-1)_{n-1}.\) 
Then, we can rewrite Term 1 as follows:

\begin{align*}
\text{Term 1}
&=
\mathbb E_{y_{1:N},\mathcal T}
\left[
\sum_{i=1}^N
\nabla_i
\mathbf 1\{C_i>0\}
\frac{1}{C_i}
\sum_{\substack{S\in\mathcal S\\i\in S}}
\mathbf 1\{S\in\mathcal T\}f_S
\right]
\\
&=
\mathbb E_{y_{1:N}}
\left[
\sum_{i=1}^N
\nabla_i
\sum_{\substack{S\in\mathcal S\\i\in S}}
f_S\,
\mathbb E_{\mathcal T}
\left[
\mathbf 1\{C_i>0\}
\frac{\mathbf 1\{S\in\mathcal T\}}{C_i}
\right]
\right]
\\
&=
\mathbb E_{y_{1:N}}
\left[
\sum_{i=1}^N
\nabla_i
\sum_{\substack{S\in\mathcal S\\i\in S}}
f_S\,
\mathbb E_{\mathcal T}
\left[
\frac{\mathbf 1\{S\in\mathcal T\}}{C_i}
\right]
\right],
\end{align*}

Now, note that by uniform sampling of \(\mathcal T\), the quantity \( \mathbb E_{\mathcal T} \left[ \frac{\mathbf 1\{S\in\mathcal T\}}{C_i} \right]\) is the same for every \(S\in\mathcal G_i\). Moreover,
\begin{align*}
\sum_{S\in\mathcal G_i}
\mathbb E_{\mathcal T}
\left[
\frac{\mathbf 1\{S\in\mathcal T\}}{C_i}
\right]
&=
\mathbb E_{\mathcal T}
\left[
\sum_{S\in\mathcal G_i}
\frac{\mathbf 1\{S\in\mathcal T\}}{C_i}
\right]
\\
&=
\mathbb E_{\mathcal T}
\left[
\mathbf 1\{C_i>0\}
\right]
\\
&=
q_K.
\end{align*}
In the second line, we simply used the fact that $\sum_{S \in \mathcal{G}_i} \frac{1\{ S \in \mathcal{T}\}}{C_i} = \frac{C_i}{C_i}$ if $C_i > 0$ (we are suppressing the fact that, by definition, we set this term to be 0 if $C_i = 0$). As such, for every \(S\in\mathcal G_i\), recalling that $d = |\mathcal{G}_i|$, we can write:
\[
\mathbb E_{\mathcal T}
\left[
\frac{\mathbf 1\{S\in\mathcal T\}}{C_i}
\right]
=
\frac{q_K}{d}.
\]
As such, Term 1 can be simplified as follows:
\begin{align*}
\text{Term 1}
&=
\frac{q_K}{d}
\mathbb E_{y_{1:N}}
\left[
\sum_{i=1}^N
\nabla_i
\sum_{\substack{S\in\mathcal S\\i\in S}}
f_S
\right]
\\
&=
\frac{q_K}{d}
\mathbb E_{y_{1:N}}
\left[
\sum_{S\in\mathcal S}
f_S
\sum_{i\in S}\nabla_\theta \log \pi_\theta(y_i \mid x)
\right]
\\
&=
\frac{q_K}{d}
\sum_{S\in\mathcal S}
\mathbb E_{y_{1:N}}
\left[
f_S
\sum_{i\in S}\nabla_\theta \log \pi_\theta(y_i \mid x)
\right].
\end{align*}

For any fixed ordered tuple \(S=(s_1,\ldots,s_n)\), the random variables
\(y_{s_1},\ldots,y_{s_n}\) are i.i.d.\ samples from
\(\pi_\theta(\cdot\mid x)\). Because the ordering of the coordinates in
\(S\) is retained, the score-function identity gives
\begin{align*}
\mathbb E_{y_{1:N}}
\left[
f_S
\sum_{i\in S}\nabla_i
\right]
&=
\mathbb E_{y_{s_1},\ldots,y_{s_n}}
\left[
f(x,y_{s_1},\ldots,y_{s_n})
\sum_{r=1}^n
\nabla_\theta\log\pi_\theta(y_{s_r}\mid x)
\right]
\\
&=
\nabla_\theta
\mathbb E_{y_{1:n}\sim\pi_\theta(\cdot\mid x)}
\left[
f(x,y_{1:n})
\right].
\end{align*} Therefore,
\begin{align*}
\text{Term 1}
&=
\frac{q_K}{d}
K_{\mathrm{all}}
\nabla_\theta
\mathbb E_{y_{1:n}\sim\pi_\theta(\cdot\mid x)}
\left[
f(x,y_{1:n})
\right]
\\
&=
\frac{N}{n}q_K
\nabla_\theta
\mathbb E_{y_{1:n}\sim\pi_\theta(\cdot\mid x)}
\left[
f(x,y_{1:n})
\right],
\end{align*}
where we used
\[
\frac{K_{\mathrm{all}}}{d}
=
\frac{(N)_n}{n(N-1)_{n-1}}
=
\frac{N}{n}.
\]

Next, we simplify \(\text{Term 2}\). Recall that we can write the baseline as:
\[
\widehat f_{\mathcal T}(x)
=
\frac{1}{K}\sum_{U\in\mathcal T}f_U,
\]
we have
\begin{align*}
\text{Term 2}
&=
\mathbb E_{y_{1:N}}
\left[
\sum_{i=1}^N
\nabla_i
\sum_{U\in\mathcal S}
f_U\,
\mathbb E_{\mathcal T}
\left[
\mathbf 1\{C_i>0\}
\frac{\mathbf 1\{U\in\mathcal T\}}{K}
\right]
\right].
\end{align*}

For fixed \(U\in\mathcal S\) and fixed \(i\), there are two cases.
First, suppose that \(i\in U\). Now, if \(U\in\mathcal T\), we
necessarily have \(C_i>0\), and therefore
\begin{align*}
\mathbb E_{\mathcal T}
\left[
\mathbf 1\{C_i>0\}
\frac{\mathbf 1\{U\in\mathcal T\}}{K}
\right]
&=
\frac{1}{K}\Pr_{\mathcal T}(U\in\mathcal T)
\\
&=
\frac{1}{K}\frac{K}{K_{\mathrm{all}}}
\\
&=
\frac{1}{K_{\mathrm{all}}}.
\end{align*}
Second, suppose that \(i\notin U\). In this case, \(f_U\) is independent
of \(y_i\), and hence
\[
\mathbb E_{y_{1:N}}[f_U\nabla_i]
=
\mathbb E_{y_{1:N}}[f_U]\,
\mathbb E_{y_i}[\nabla_i]
=
0.
\]
It follows that
\begin{align*}
\text{Term 2}
&=
\frac{1}{K_{\mathrm{all}}}
\sum_{U\in\mathcal S}
\mathbb E_{y_{1:N}}
\left[
f_U\sum_{i\in U}\nabla_i
\right]
\\
&=
\frac{1}{K_{\mathrm{all}}}
\sum_{U\in\mathcal S}
\nabla_\theta
\mathbb E_{y_{1:n}\sim\pi_\theta(\cdot\mid x)}
\left[
f(x,y_{1:n})
\right]
\\
&=
\nabla_\theta
\mathbb E_{y_{1:n}\sim\pi_\theta(\cdot\mid x)}
\left[
f(x,y_{1:n})
\right].
\end{align*}

Combining \(\text{Term 1}\) and \(\text{Term 2}\), we obtain
\begin{align*}
&
\mathbb E_{y_{1:N},\mathcal T}
\left[
\sum_{i=1}^N
\nabla_\theta\log\pi_\theta(y_i\mid x)\,
\widehat{A^\sharp_{\mathrm{marg}}}
(x,y_i;f\mid\mathcal T)
\right]
\\
&=
\left(
\frac{N}{n}q_K-1
\right)
\nabla_\theta
\mathbb E_{y_{1:n}\sim\pi_\theta(\cdot\mid x)}
\left[
f(x,y_{1:n})
\right].
\end{align*}
Thus,
\[
M_K
=
\frac{N}{n}q_K-1.
\]

It remains to verify the expression for \(q_K\). For a fixed index
\(i\), the number of ordered tuples in \(\mathcal S\) that do not
contain \(i\) is
\[
(N-1)_n.
\]
Equivalently,
\[
K_{\mathrm{all}}-d
=
(N)_n-n(N-1)_{n-1}
=
(N-1)_n.
\]
The event \(C_i=0\) occurs precisely when all \(K\) sampled ordered
tuples are selected from these \((N-1)_n\) tuples. Therefore,
\[
\Pr_{\mathcal T}(C_i=0)
=
\frac{\binom{(N-1)_n}{K}}
     {\binom{(N)_n}{K}},
\]
where \(\binom{a}{K}=0\) when \(K>a\). Therefore, we can write
\[
q_K
=
\Pr_{\mathcal T}(C_i>0)
=
1-
\frac{\binom{(N-1)_n}{K}}
     {\binom{(N)_n}{K}}.
\]

Finally, when \(K=1\),
\[
q_1
=
\frac{d}{K_{\mathrm{all}}}
=
\frac{n}{N},
\]
and consequently \(M_1=0\), as expected because a single sampled tuple
has zero advantage relative to its own baseline. When \(N>n\) and
\(K>1\), we have that:
\[
q_K>q_1=\frac{n}{N}.
\]
Therefore,
\[
M_K
=
\frac{N}{n}q_K-1
>
0.
\]
\end{proof}

\end{document}